\documentclass[11pt]{article}

\usepackage[preprint]{acl}

\usepackage{times}
\usepackage{latexsym}

\usepackage[T1]{fontenc}

\usepackage[utf8]{inputenc}

\usepackage{microtype}

\usepackage{inconsolata}

\usepackage{graphicx}
\usepackage{multirow}
\usepackage{microtype}
\usepackage{graphicx}
\usepackage{amsmath,amssymb}
\usepackage{amsthm}
\usepackage{xcolor}
\usepackage{colortbl} 
\usepackage{array} 
\usepackage{booktabs} 
\usepackage{subcaption}

\theoremstyle{plain}
\newtheorem{theorem}{Theorem}[section]
\newtheorem{proposition}[theorem]{Proposition}
\newtheorem{lemma}[theorem]{Lemma}
\newtheorem{corollary}[theorem]{Corollary}
\theoremstyle{definition}
\newtheorem{definition}[theorem]{Definition}

\theoremstyle{remark}
\newtheorem{remark}[theorem]{Remark}

\title{$\mathcal{R}\mathcal{I}\mathcal{N}\mathcal{G}$: $\mathcal{R}$etrieval-$\mathcal{I}$nternalized $\mathcal{G}$eneration for Continual Large-Scale Knowledge Injection}

\newcommand{\pgen}{p_{\theta_{\mathrm{gen}}}}
\newcommand{\Ezp}{\mathbb{E}_{z\sim p_\phi(\cdot\mid x)}}
\newcommand{\Wkd}{W^{(\ell)}_{\text{knw},\text{down}}}
\newcommand{\Wkg}{W^{(\ell)}_{\text{knw},\text{gate}}}
\newcommand{\Wku}{W^{(\ell)}_{\text{knw},\text{up}}}

\author{%
Shicheng Xu$^{1,2}$ \quad Liang Pang$^{1}$\thanks{\ \ Corresponding authors.} \quad Liyi Chen$^{3}$ \quad Zihao Wei$^{1,2}$ \quad Jingcheng Deng$^{1,2}$ \\ \bf Yan Gao$^{3}$ \quad Yi Wu$^{3}$ \quad Yao Hu$^{3}$ \quad Huawei Shen$^{1}$  \quad Xueqi Cheng$^{1}$\\
$^{1}$State Key Laboratory of AI Safety,
 Institute of Computing Technology, CAS \\
 $^{2}$University of Chinese Academy of Sciences \quad $^{3}$Xiaohongshu Inc.\\
  \small{\texttt{xushicheng21s@ict.ac.cn} \quad\texttt{pangliang@ict.ac.cn}}}

\begin{document}
\maketitle
\begin{abstract}
Retrieval-augmented generation (RAG) improves factuality but adds latency and engineering overhead at serving time. We propose \textbf{RING} (\textit{Retrieval-Internalized Generation}), a {holistic paradigm spanning both architecture and training} that injects large-scale external knowledge into a \textit{Mixture-of-Memory Experts} and learns \textit{parametric search} over this internal memory via reinforcement learning, removing the external retriever entirely. Training proceeds in three stages: continued pre-training injects new corpora into a Knowledge Expert via our novel \textit{Dual Causal Attention}; supervised fine-tuning teaches a ``search-then-answer'' pattern; and reinforcement learning with hierarchical rewards optimizes the routing-and-search policy over the parametric memory. Unlike prior parametric injection methods that pair internal memory with a {fixed} or {rule-based} retriever, RING {learns} its retrieval policy directly from task signals. We further frame RING theoretically as a search-free approximation to the classical RAG objective. To evaluate large-scale injection of genuinely {new} knowledge without test-time leakage, we further construct \textbf{News-2025}, a benchmark built from news strictly post-dating the base LLM's pretraining cutoff. RING matches or surpasses both search-based RAG and parametric injection baselines in accuracy and efficiency.
\end{abstract}

\section{Introduction}
Large language models (LLMs) store factual knowledge in their parameters~\cite{hu2024towards} but struggle to assimilate new knowledge reliably, leading to hallucinations and outdated outputs~\cite{wang2023survey}. Retrieval-Augmented Generation (RAG) addresses this by bridging LLMs with external knowledge sources~\cite{lewis2020retrieval,xu2024search}: relevant documents are retrieved and appended to the LLM's context, grounding generation in external evidence~\cite{gao2023retrieval,wei2026dynamic,deng2025latent}.

Despite RAG's success, its reliance on external retrievers introduces significant inference latency and engineering complexity, and even generative retrieval~\cite{li2025matching} still depends on an external corpus for content mapping. Research has thus pivoted toward purely parametric injection, but existing approaches face severe bottlenecks: Knowledge Editing~\cite{zhang2024comprehensive} and Continual Learning~\cite{wang2024comprehensive} struggle with unstructured updates~\cite{deng2024everything} and catastrophic forgetting~\cite{nguyen2019toward}; Modular Injection methods (KBLAM~\cite{wang2025kblam}, SR-KI~\cite{yu2025sr}) require memory linear in corpus size; and Parametric RAG~\cite{su2025parametric} still depends on an external search engine to identify relevant documents at inference. {A scalable, fully parametric solution that matches RAG's performance remains elusive}.

\begin{figure*}[t]
\begin{center}
\centerline{\includegraphics[width=0.9\linewidth]{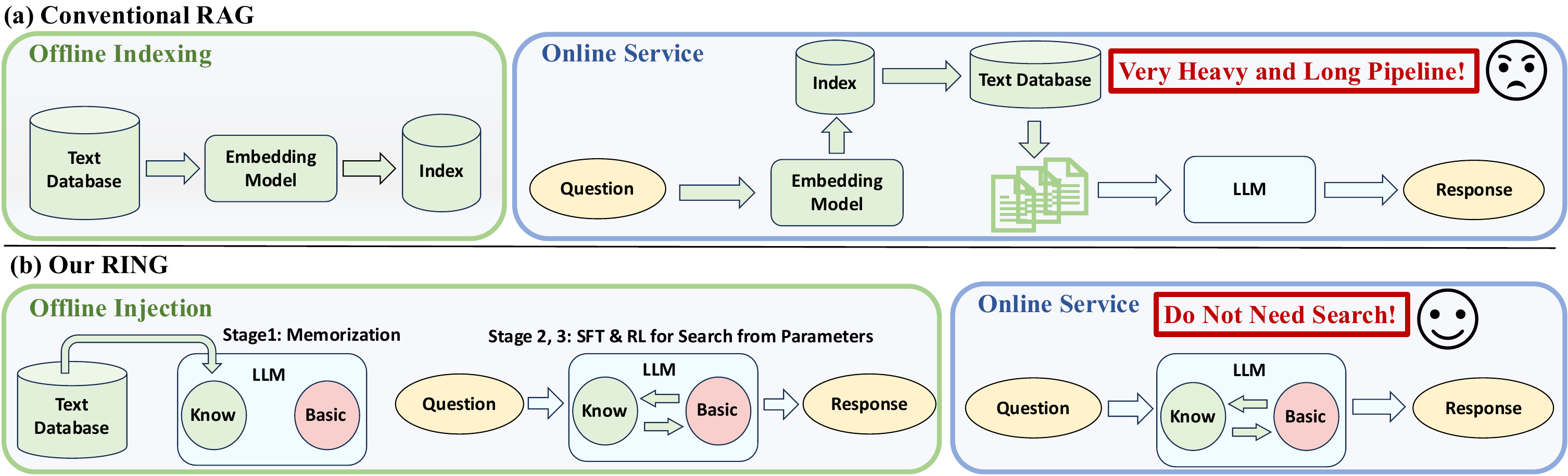}}
\caption{Overview of RING versus Conventional RAG. Unlike traditional methods that depend on external embedding models and vector databases (top), RING (bottom) unifies indexing and retrieval within the LLM parameters via a Mixture-of-Experts architecture. This design shifts retrieval computation to the training stage, achieving high-efficiency search-free inference for RAG.}
\label{overview}
\end{center}
\end{figure*}
To address this gap, we propose to {internalize} external search within the LLM parameters and call this \emph{holistic paradigm spanning both architecture and training} \textbf{Retrieval-Internalized Generation (RING)}, in direct contrast to RAG: where RAG {augments} the LLM with an external retriever, RING {internalizes} the retriever inside the LLM as a {learned policy} via the joint co-design of architecture and training. The central thesis of RING is that the bottleneck of parametric knowledge injection is not how to {store} knowledge in weights, but how to {search} those weights at inference time. Prior parametric methods all pair internal memory with a {fixed} or {rule-based} retrieval mechanism---attention similarity (KBLAM, SR-KI), $k$NN imitation (MLP Memory), hierarchical pruning (AtlasKV), or unsupervised routing (LAG)---and consequently underperform RAG at scale. {RING is the first to learn the parametric retrieval policy end-to-end via reinforcement learning}: hierarchical rewards over routing accuracy, search relevance, and answer correctness train the model to jointly select which expert holds the relevant knowledge and which sub-region inside that expert to attend to. This transforms parametric retrieval from a static lookup into a {trainable behavior}, eliminating the external index entirely while yielding cost profiles that scale with model FLOPs rather than index size.

\textbf{Architecture.} RING augments a dense LLM with a sparse \emph{Mixture-of-Memory Experts}: a \emph{Basic Expert} preserves the original LLM weights to mitigate catastrophic forgetting; a \emph{Knowledge Expert} parametrically memorizes and indexes the new corpus; and a learnable \emph{Router} selects between them at each token, integrating newly injected knowledge with the LLM's base reasoning.

\textbf{Three-Stage Training.} RING operationalizes this architecture with a CPT--SFT--RL pipeline. {(i) Continued Pre-training} injects the new corpus into the Knowledge Expert via our novel {Dual Causal Attention (DCA)}, which models new passages bidirectionally while preserving the LLM's causal generation~\cite{zhang2025bidirectional}. {(ii) Supervised Fine-Tuning} teaches a \emph{search-then-answer} response pattern within a single generative pass, with no external retriever. {(iii) Reinforcement Learning is the core of our pipeline and the locus of RING's main innovation.} Drawing on evidence that RL improves an LLM's ability to sample useful trajectories from its own parameter space~\cite{yue2025does}, we cast parametric retrieval as a policy optimization problem and use hierarchical rewards to simultaneously train (a) the Router to select the right expert, (b) the Knowledge Expert to surface the relevant memory fragment, and (c) the generator to answer grounded in that fragment. The result is a fully {learned} parametric retriever---unlike all prior parametric injection methods whose retrieval mechanism is hand-designed and frozen.

\textbf{Theoretical Grounding.} We further formulate RING as a discrete latent variable model and prove that the Knowledge Expert acts as a differentiable Key--Value memory index, while our CPT--SFT--RL pipeline performs variational inference that maximizes the Evidence Lower Bound, theoretically guaranteeing RING's capacity to approximate the classical RAG objective.

\textbf{News-2025 Benchmark.} To rigorously evaluate large-scale injection of {genuinely new} knowledge without test-time leakage, we construct {News-2025}, a bilingual benchmark of $\sim$170k Chinese / English news documents (341M tokens) strictly post-dating the base LLM's pretraining cutoff, so that correct answers cannot come from pre-existing knowledge or associative reasoning. Our contributions are:
\begin{itemize}
\vspace{-0.8em}
    \item We propose Retrieval-Internalized Generation (RING), a {holistic paradigm spanning both architecture and training} that eliminates external search by internalizing both indexing and retrieval into LLM parameters.
    \vspace{-0.8em}
    \item We construct {News-2025}, a bilingual post-cutoff benchmark for contamination-free evaluation of knowledge injection at scale.
    \vspace{-0.8em}
    \item Empirically, RING outperforms parametric injection baselines on News-2025, matches strong RAG pipelines in accuracy, and runs $3\times$--$19\times$ faster than RAG at inference.
\end{itemize}

\section{Related Work}

\paragraph{Retrieval-Augmented Generation}
RAG combines a parametric LLM with an external retriever to ground generation in retrieved text~\cite{lewis2020retrieval,izacard2020leveraging}, but introduces latency and engineering overhead at serving time. Generative retrieval~\cite{tay2022transformer,li2025matching,bevilacqua2022autoregressive} replaces vector search with a generative process predicting document IDs, yet still depends on an external corpus to resolve IDs into content. RING instead parameterizes retrieval entirely within the LLM, removing all runtime dependence on external search.

\vspace{-0.5em}
\paragraph{Parameterized Knowledge Injection}
Parameterized knowledge injection internalizes external information into model weights. Continual learning~\cite{diao2023survey} suffers from catastrophic forgetting on unstructured updates; knowledge editing (e.g., MEMIT~\cite{meng2022memit}) is limited to atomic facts; modular injection (KBLAM~\cite{wang2025kblam}, SR-KI~\cite{yu2025sr}) has memory linear in corpus size; Parametric RAG~\cite{su2025parametric} parameterizes documents via adapters but still queries an external retriever at inference; Temp-LoRA~\cite{wang2024greater} produces ephemeral adapters for long context, not permanent memorization. RING unifies memory, retrieval, and generation inside the LLM, achieving fully parametric, search-free augmentation at scale.

\section{RING}
\subsection{Model Architecture}\label{model_arch}
\paragraph{Overview}
As shown in Figure~\ref{model-arch}, RING replaces a dense decoder with a sparse MoE layer stack composed of a \emph{basic expert} $E_{\text{base}}$ and a \emph{knowledge expert} $E_{\text{knw}}$. This is initialized by duplicating the MLP weights; the \emph{basic expert} retains the original weights, while the \emph{knowledge expert} is initialized by the original MLP and used for subsequent new knowledge memorization. A \emph{Router} $r_\phi$ produces a sparse gate $g_t \in {0,1}$ at each layer and position, deciding which expert computes the token’s transformation. $E_{\text{base}}$ preserves general abilities and minimizes forgetting, while $E_{\text{knw}}$ parameterizes the external corpus and acts as an implicit index.

\paragraph{Sparse Routing}
At layer $\ell$ and position $t$ with hidden state $h^{(\ell)}_t$, the \emph{Router} first produces expert probabilities $p^{(\ell)}_t$ and makes a top\text{-}1 selection $g^{(\ell)}_t$:
\begin{align}
p^{(\ell)}_t &= \operatorname{softmax}\!\big(W_r^{(\ell)} h^{(\ell)}_t\big), \quad 
g^{(\ell)}_t = \arg\max_{k \in \mathcal{K}} \, p^{(\ell)}_{t,k}.\notag
\end{align}
Let the expert set be $\mathcal{K}=\{\text{base},\text{knw}\}$, and denote by $E_k^{(\ell)}(\cdot)$ the transformation of expert $k$ at layer $\ell$. With hard (top\text{-}1) routing (we omit LayerNorm and residual connections to simplify the expression):
\begin{equation}
\tilde h^{(\ell+1)}_t
= \sum_{k\in \mathcal{K}} \mathbf{1}\!\left[g^{(\ell)}_t = k\right]\,
E^{(\ell)}_{k}\!\left(h^{(\ell)}_t\right).
\end{equation}

\begin{figure}[t]
\begin{center}
\centerline{\includegraphics[width=\columnwidth]{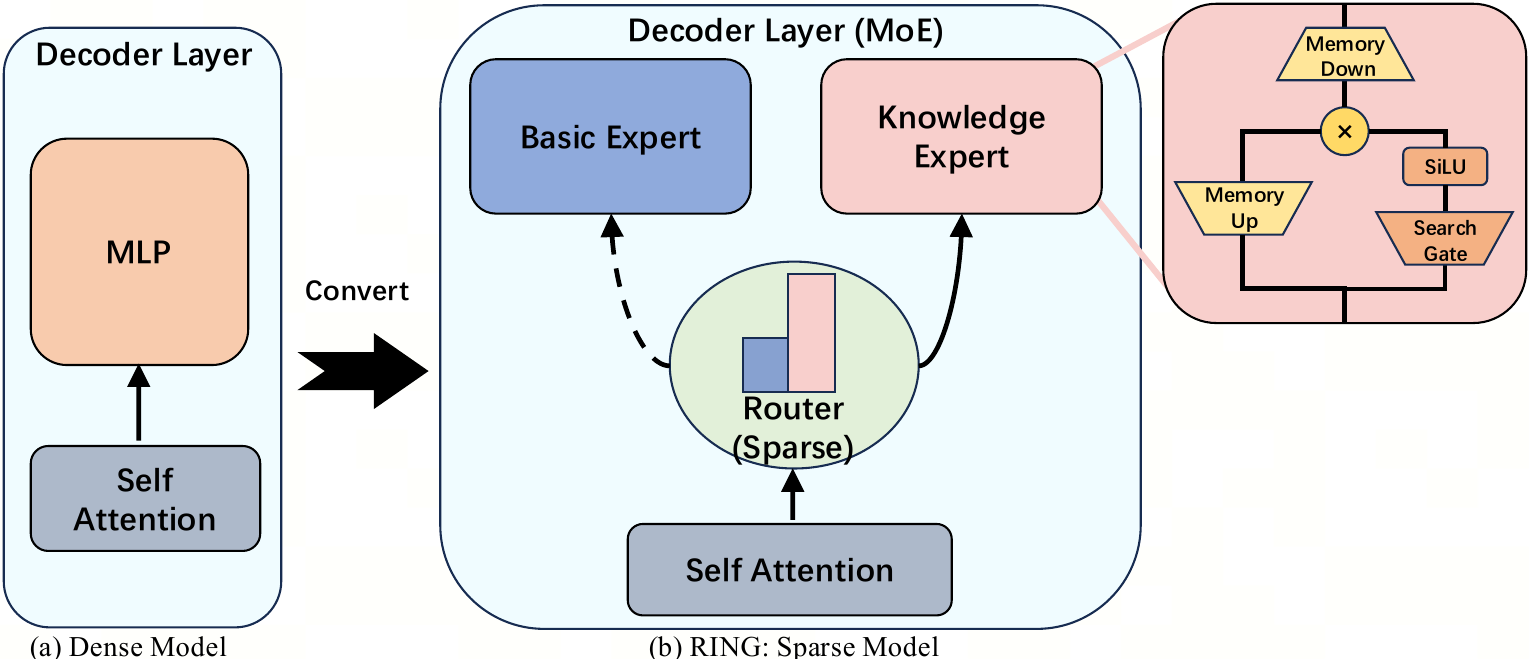}}
\caption{Model Architecture in our RING.}
\label{model-arch}
\end{center}
\end{figure}

\paragraph{Parameter Separation}
We further separate the parameters of \emph{knowledge expert} $E_{\text{knw}}$ into memory $E_{\text{knw}}^{m}$ and search $E_{\text{knw}}^{s}$. This ensures that subsequent training of search capabilities will not affect the original memory and allows for incremental addition of new memory data. As shown in Figure~\ref{model-arch}, $E_{\text{knw}}^{m}$ consists of memory up and down projection layers and $E_{\text{knw}}^{s}$ is a gate projection followed by a Sigmoid Linear Unit (SiLU) function. This design is inspired by existing observations that up and down projection layers in Transformers are important key-value memories responsible for organizing the massive amounts of knowledge in LLMs~\cite{geva2021transformer}. In our design, \emph{Memory Down} is used to store knowledge, while \emph{Memory Up} is a parametric index of the stored knowledge. \emph{Search Gate} works in conjunction with SiLU to reweight indexes for different queries in order to acquire corresponding knowledge in text generation. This parameter separation avoids interference between different training stages in RING.



\subsection{Continued Pre-training for Memorization} \label{CPT}
\paragraph{Data Preprocess}
Given a large-scale unstructured knowledge base $\mathcal{D}$ that requires LLMs to memorize (just like the corpus for retrieval in conventional RAG).
We perform data preprocessing to construct training data for CPT in RING. During this process, we segment each unstructured long document into multiple non-overlapping fragments, each consisting of three consecutive sentences. We combine the document title and the fragment content into a JSON-formatted text as:
\begin{flushleft}
\begingroup\ttfamily\small
\begin{verbatim}
u={"title": "title of the document",
  "content": "content of the fragment"}
\end{verbatim}
\endgroup
\end{flushleft}
which is the unit for memorization and search in the CPT, SFT, and RL stages. To ensure the semantic integrity of the long documents, we also add the full text of the document to the training set in CPT. However, this full-text data is only for memory assistance and will not be generated during the downstream search process. After the above process on the entire corpus, we obtain the CPT training set $\mathcal{D}_{\mathrm{CPT}}$.

\begin{figure}[t]
\begin{center}
\centerline{\includegraphics[width=\columnwidth]{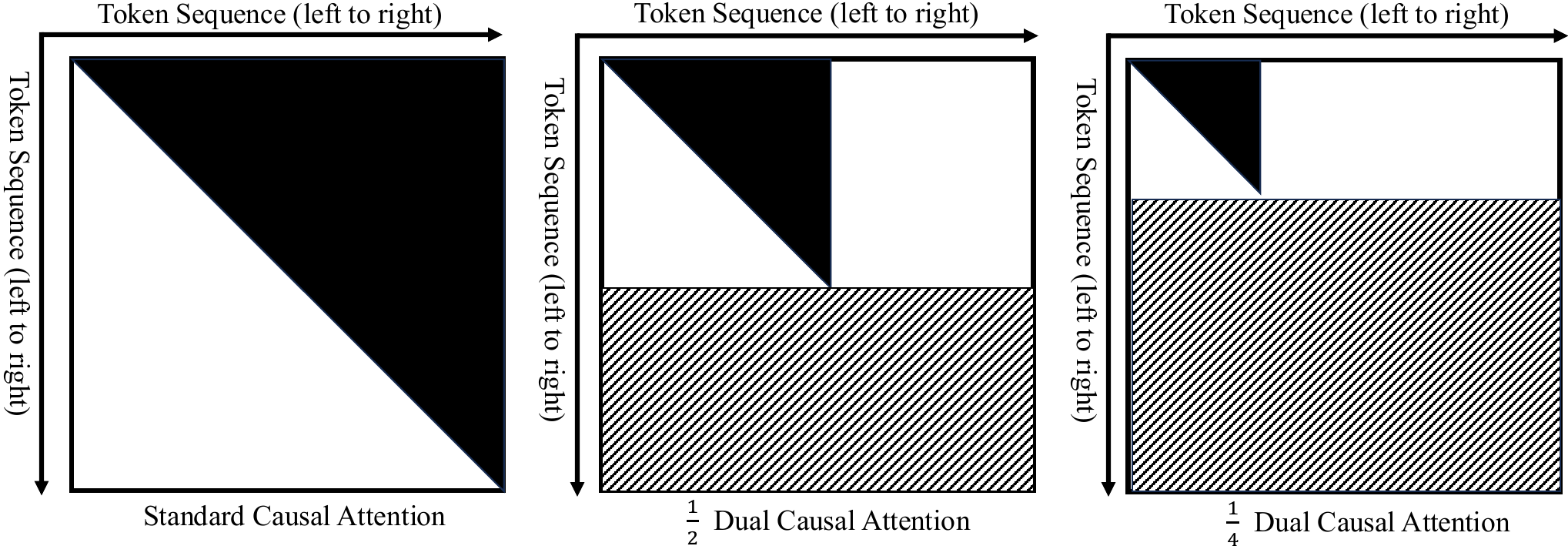}}
\caption{Dual Causal Attention. The white areas are visible tokens, the black areas are masked tokens, and the shaded areas represent invalid tokens.}
\label{dca}
\end{center}
\end{figure}

\paragraph{Dual Causal Attention}
CPT injects new knowledge into the Knowledge Expert. While bidirectional attention is known to be more effective for memorization~\cite{zhang2025bidirectional}, the LLM's autoregressive causal mask conflicts with it. We propose \textbf{Dual Causal Attention (DCA)}: each CPT example is run with three attention masks $M^{(\alpha)}$, $\alpha\in\{1,\tfrac12,\tfrac14\}$, that partition the sequence into a target block $B_\alpha$ (causally masked internally) and a complement $C_\alpha$ (visible as backward evidence); the resulting per-mask NLL losses are combined as $\mathcal{L}_{\mathrm{DCA}}=\sum_{\alpha}\lambda_{\alpha}\mathcal{L}^{(\alpha)}$ (Figure~\ref{dca}). $\alpha=1$ recovers standard causal pre-training; $\alpha\in\{\tfrac12,\tfrac14\}$ introduce bidirectional supervision \emph{without} leaking future targets inside $B_\alpha$. At inference, only $M^{(1)}$ is used, so generation remains strictly autoregressive. Formal mask definitions, the per-block NLL, and the combined loss are given in Appendix~\ref{sec:dca-formal}.

\paragraph{Training} CPT freezes the entire model except the Knowledge Expert's \emph{Memory Down} and \emph{Memory Up} projections, and forces the Router to dispatch every token to the Knowledge Expert. Optimizing $\mathcal{L}_{\mathrm{DCA}}$ on $\mathcal{D}_{\mathrm{CPT}}$ concentrates injected knowledge into the storing (Memory Down) and indexing (Memory Up) projections while preserving the base model's capabilities.

\subsection{SFT for Search-then-Answer}\label{sft}
\paragraph{Search-then-Answer Paradigm}
In the SFT stage, we introduce a novel \emph{Search-then-Answer} paradigm designed to integrate knowledge retrieval seamlessly within the LLM's generation process. When the LLM receives a query, it first identifies whether new, externally injected knowledge is required to answer the question. If so, it performs a parametric retrieval from the knowledge expert $E_{\text{knw}}$ to generate the top-1 relevant fragment. This fragment is enclosed within special tokens \texttt{<retrieval>} and \texttt{</retrieval>}. The retrieved fragment is typically short (with about 200 tokens) to avoid excessive additional computational overhead during generation. Once the relevant fragment is identified, the LLM transitions to the next phase where it generates the final answer within the \texttt{<answer>} and \texttt{</answer>} tags. This design follows the natural generative flow.

\paragraph{Instruction Fine-tuning Data and Loss}
To bootstrap the \emph{Search-then-Answer} behavior, we use \texttt{Qwen3-235B} to synthesize $\{(q_i, a_i, r_i)\}$ triples from documents in $\mathcal{D}_{\mathrm{CPT}}$, where $r_i$ is a reference span justifying $a_i$, and align each $r_i$ to a memory unit $u_i^\star\in\mathcal{D}_{\mathrm{CPT}}$ via exact string match (triples without a match are discarded). Each accepted triple is formatted as a search-then-answer target string with $u_i^\star$ in the \texttt{<retrieval>} block and $a_i$ in the \texttt{<answer>} block. We mix in no-retrieval instructions from a generic corpus to teach ``retrieve only when needed.'' The model is trained with standard cross-entropy on the target string. During SFT we freeze the CPT \emph{Memory Down/Up} and only update the Router and Search Gate, ensuring retrieval behavior is learned without overwriting memorized content. Full data recipe, target template, and loss formulation are in Appendix~\ref{datasets}.



\subsection{Reinforcement Learning for Search from the Parametric Knowledge}
RL stage in our RING aims to enable the LLM to learn a generalizable ability to retrieve information from its parametric knowledge base. Drawing on existing research that RL enhances the LLM's ability to sample more effectively from its parameter space~\cite{yue2025does,deng2026latent}, we find this process closely mirrors traditional retrieval, where the goal is to identify the most relevant document to answer a query from a library. Hence, we believe RL holds great potential in enabling the LLM to perform generative retrieval directly from its parametric knowledge base. After the SFT phase, which provides the LLM with a basic \emph{Search-then-Answer} paradigm, we apply RL to teach the LLM how to search for the most relevant memory units from the parametric knowledge base. The RL algorithm we use is GSPO~\cite{zheng2025group}. The reward structure is designed to guide the LLM's generative retrieval, rank and answer process in three ways:

\textbf{1. Format Reward.} ($R_{\text{format}}$) When facing instruction that is relevant to the external knowledge, the LLM must adhere to the \emph{Search-then-Answer} format. The response should first include the retrieval block enclosed within the tokens \texttt{<retrieval>} and \texttt{</retrieval>}, followed by the answer within \texttt{<answer>} and \texttt{</answer>}. If the format is correct, the reward is 1; otherwise, it is 0.

\textbf{2. Search Reward.} ($R_{\text{search}}$) This reward aims to guide LLM to find the most matching units from the external corpus $\mathcal{D}_{\mathrm{CPT}}$ memorized in the CPT stage, and output the memory units in \texttt{<retrieval>} ... \texttt{</retrieval>}, we introduce a dense reward mechanism to achieve this. This mechanism extracts the \texttt{title} and \texttt{content} of the memory unit within \texttt{<retrieval>} and \texttt{</retrieval>} of the LLM's generated sequences, and then calculates a series of rewards to guide the LLM's learning of both retrieval and re-ranking in its generation. The design principle of these rewards is to encourage LLM to generate units that are highly consistent with correct memory units. (1) {Learning to Retrieve Reward} ($R^{1}_{\text{dense}}$): This reward measures the normalized longest common substring (LCS) length between the retrieved memory unit's \texttt{title} and the \texttt{title} of the correct memory unit. A higher LCS indicates better retrieval performance, and a higher reward is given. (2) {Learning to Rerank Reward} ($R^{2}_{\text{dense}}$): This reward measures the normalized LCS between the retrieved memory unit's \texttt{content} and the \texttt{content} of the correct memory unit. This reward encourages the model to select more relevant content conditioned on the already fuzzy-matched titles.
(3) {Exact Match Reward} ($R^{3}_{\text{dense}}$): This reward checks if the reference passage (ref) is exactly present in the generated \texttt{content}. If the reference is present, the reward is 1; otherwise, the reward is 0. This reward helps the model to retrieve exact top-1 ground truth. The combined search reward is:
\begin{align}
    R_{\text{search}}=\lambda_{\text{a}}R^{1}_{\text{dense}} + \lambda_{\text{b}}R^{2}_{\text{dense}} + \lambda_{\text{c}}R^{3}_{\text{dense}},
\end{align}
 \(\lambda_{\text{a}}\), \(\lambda_{\text{b}}\), and \(\lambda_{\text{c}}\) are hyperparameters in Section~\ref{imple}.
 
We use exact substring matching rather than semantic matching because we observe with semantic matching, the model might learn to generate semantically similar, but incorrect, responses that still satisfy the reward conditions. Literal matching is straightforward and prevents such issues.

\textbf{3. Answer Reward.} ($R_{\text{answer}}$) The LLM's answer, enclosed in the \texttt{<answer>} block, is compared to the correct reference answer. We use \texttt{Qwen3-30B-A3B} to judge whether the LLM’s generated answer matches the correct answer. If the answer is correct, the reward is 1; if it is incorrect, the reward is 0. 

The total reward is the sum of the rewards:
\[
R_{\text{total}} = \lambda_{\text{format}} R_{\text{format}} + \lambda_{\text{answer}} R_{\text{answer}} + \lambda_{\text{search}} R_{\text{search}},
\]
where \(\lambda_{\text{format}}\), \(\lambda_{\text{answer}}\), and \(\lambda_{\text{search}}\) are hyperparameters in Section~\ref{imple}. By RL, LLM learns to retrieve and generate from the parametric knowledge base.


\section{Theoretical Analysis}
\label{sec:theory_main}

We formulate RING as a principled discrete latent variable model that approximates the classical RAG objective. This theoretically justifies our MoE architecture and the training pipeline. Detailed proofs are provided in Appendix~\ref{sec:theory}.

\paragraph{Latent Evidence and RING Approximation}
Let $x$ be the query, $y$ the answer, and $\mathcal{U}=\{u_z\}_{z=1}^N$ the universe of injected memory units. Conventional RAG marginalizes over a retrieved latent variable $z$:
\begin{equation}
    p(y\mid x) = \sum_{z=1}^N p(z\mid x, \mathcal{U}) \, p_\theta(y\mid x, u_z).
\end{equation}
Conventional RAG relies on a non-parametric $p(z\mid x, \mathcal{U})$ via external vector search. RING parameterizes this retrieval fully internally as:
\begin{equation}
    p_\Theta(y, z \mid x) = \underbrace{p_\phi(z \mid x)}_{\text{Router \& Gate}} \cdot \underbrace{p_{\theta_{\text{gen}}}(y \mid x, u_z; \Theta_{\text{mem}})}_{\text{Knowledge Expert}},\notag
\end{equation}
where $\Theta_{\text{mem}}$ stores $\mathcal{U}$. We prove in {Theorem~\ref{thm:app_containment}} that RING's hypothesis space contains explicit top-$K$ RAG, implying that parametric retrieval can theoretically match external search expressivity (Corollary~\ref{cor:app_risk}).

\paragraph{Knowledge Expert as Key-Value Memory}
We justify our specific Knowledge Expert architecture by showing it is structurally isomorphic to a differentiable Key-Value retrieval mechanism.
As derived in {Proposition~\ref{prop:app_kv_readout}}, the computation of the Knowledge Expert $E_{\text{knw}}(h)$ is:
\begin{align}
    E_{\text{knw}}(h) 
    &= \sum_{i} \underbrace{\text{SiLU}(\langle g_i, h \rangle) \langle k_i, h \rangle}_{\text{Attention Score } \alpha_i(h)} \, v_i,
\end{align}
where $k_i$ (rows of $W_{\text{up}}$) act as {Keys}, $v_i$ (cols of $W_{\text{down}}$) as {Values}, and $g_i$ as a query-dependent {Search Gate}.
{Lemma~\ref{lem:app_mips_softmax}} suggests that with learned gating, this mechanism approaches Maximum Inner Product Search, allowing precise retrieval from millions of parametric slots.

\paragraph{Training as Variational Inference}
We aim to maximize the marginal log-likelihood of generating the correct answer, $\log p_\Theta(y \mid x)$. Since marginalizing over all latent memory units $z$ is intractable, we maximize the Evidence Lower Bound (ELBO). Introducing a variational posterior $q(z \mid x, y)$ over the evidence (derivation in Eq.~\eqref{eq:app_elbo}):
\begin{align}
    \log p_\Theta(y \mid x) &\ge \mathbb{E}_{q(z \mid x, y)} \left[ \log \frac{p_\Theta(y, z \mid x)}{q(z \mid x, y)} \right] \nonumber \\
    &= \underbrace{\mathbb{E}_{q(z \mid x, y)}[\log p_{\theta_{\text{gen}}}(y \mid x, u_z)]}_{\text{Generation (Reconstruction)}} \notag \\ & -\underbrace{\text{KL}(q(z \mid x, y) \,||\, p_\phi(z \mid x))}_{\text{Retrieval (Alignment)}}. \notag
\end{align}
This decomposition provides a unified theoretical view of our three-stage pipeline:

CPT (Evidence Maximization): The generation term $\log p(y|x, u_z)$ fundamentally relies on the model's ability to comprehend and reconstruct the memory unit $u_z$. By minimizing the DCA loss (Prop.~\ref{prop:app_dca_monotone}), CPT injects $\mathcal{U}$ into $\Theta_{\text{mem}}$, ensuring that conditioned on a retrieved $z$, the likelihood of generating relevant tokens is maximized. We prove in Prop.~\ref{prop:app_dca_monotone} that DCA provides a tighter bound on conditional entropy than standard causal attention, theoretically guaranteeing more efficient storage.
    
SFT (Posterior Alignment): In SFT, we are given the ground truth evidence $u^\star$. This is equivalent to setting the variational posterior to a delta function $q(z \mid x, y) = \delta(z = z^\star)$. The ELBO maximization then simplifies to standard supervised learning:
    \begin{equation}
        \max \big( \underbrace{\log p_{\theta_{\text{gen}}}(y \mid x, u^\star)}_{\text{Answer Generation}} + \underbrace{\log p_\phi(z^\star \mid x)}_{\text{Retrieval Supervision}} \big).
    \end{equation}
    This forces the router policy $p_\phi$ to align with the "ideal" posterior defined by the dataset.
    
RL (Generalizing the Policy): When ground truth evidence is unavailable or multiple valid snippets exist, the fixed posterior $\delta(z=z^\star)$ is suboptimal. RL optimizes the policy $p_\phi(z|x)$ to maximize the expected reward. As shown in Lemma~\ref{lem:app_dense_credit}, our dense search rewards (e.g., LCS) serve as a proxy for the unobserved latent posterior, guiding the retrieval policy to maximize the ELBO's expectation term even without explicit supervision.






\section{Experiments}
\subsection{Datasets}\label{sec:datasets}
\textbf{News-2025} is our constructed bilingual benchmark of 170k Chinese news documents (341.93M tokens) from June--July 2025, translated to English with \texttt{DeepSeek-R1-0528}. Since the base LLM (Qwen3) predates this period, correct answers are hard to be obtained from pre-existing parametric knowledge or associative reasoning, isolating each method's true injection-and-utilization capability. We construct News-2025 to probe the two capabilities RING targets: {the injection of large-scale new knowledge} and {the utilization of this injected parametric knowledge}. The corpus is partitioned into subsets for CPT, SFT, and RL (Appendix~\ref{datasets}). The CPT set includes the whole documents collection, which serves as the external knowledge base $\mathcal{D}_{\mathrm{CPT}}$ that needs to be injected. The SFT set is designed to teach the model the “Search-then-Answer” paradigm. The RL set is used for the training of ``search from parametric knowledge''. As for {the test set,} we sample a held-out document set and use \texttt{GPT-5} to construct 10k QA pairs whose source documents do \emph{not} appear in SFT or RL, jointly probing the injection-and-utilization capability.

\begin{table*}[t]
\centering
\caption{Performance on News-2025 (English \& Chinese), reported as Accuracy (\%) and Time To First Token (TTFT, seconds). ``Top-$k$'' denotes the number of documents prepended to the LLM. ``+Rerank'' denotes retrieve-top-10 with the best embedding followed by a cross-encoder reranker selecting top-3. \textbf{Bold} marks the global best in each column; \underline{underline} marks the best within the \textsc{Knowledge Injection Without External Search} group (where RING belongs). When the same entry is best both globally and within the group, only bold is used.}
\label{tab:main_results_combined}
\resizebox{0.99\textwidth}{!}{
    \setlength{\tabcolsep}{4pt}
    \renewcommand{\arraystretch}{1.00}
    \begin{tabular}{ll cccc cccc}
    \toprule
    \multirow{3}{*}{\textbf{Method}} & \multirow{3}{*}{\textbf{Search}} & \multicolumn{4}{c}{\textbf{Based on Qwen3-8B}} & \multicolumn{4}{c}{\textbf{Based on Qwen3-14B}} \\
    \cmidrule(lr){3-6} \cmidrule(lr){7-10}
    & & \multicolumn{2}{c}{\textbf{English}} & \multicolumn{2}{c}{\textbf{Chinese}} & \multicolumn{2}{c}{\textbf{English}} & \multicolumn{2}{c}{\textbf{Chinese}} \\
    \cmidrule(lr){3-4} \cmidrule(lr){5-6} \cmidrule(lr){7-8} \cmidrule(lr){9-10}
    & & \textbf{Acc ($\uparrow$)} & \textbf{TTFT ($\downarrow$)} & \textbf{Acc ($\uparrow$)} & \textbf{TTFT ($\downarrow$)} & \textbf{Acc ($\uparrow$)} & \textbf{TTFT ($\downarrow$)} & \textbf{Acc ($\uparrow$)} & \textbf{TTFT ($\downarrow$)} \\
    \midrule

    Original LLM & None & 6.75 & 0.30 & 7.15 & 0.35 & 7.26 & 0.57 & 7.73 & 0.48 \\
    \midrule
    \multicolumn{10}{l}{\textsc{Knowledge Injection With External Search (RAG, Top-1)}} \\
    & BGE-M3 & 28.57 & 1.32 & 28.94 & 1.41 & 26.54 & 2.20 & 27.88 & 2.32 \\
    & Gemini-Emb-Exp & 32.76 & 3.46 & 34.12 & 3.52 & 30.72 & 4.25 & 32.17 & 4.39 \\
    RAG & OpenAI-Emb-3L & 29.99 & 3.95 & 30.97 & 3.88 & 27.96 & 4.80 & 29.35 & 4.54 \\
    & Qwen3-0.6B-Emb & 30.84 & 1.29 & 32.17 & 1.45 & 30.41 & 2.21 & 31.84 & 2.19 \\
    & Qwen3-8B-Emb & 33.83 & 1.81 & 35.28 & 1.73 & 31.80 & 2.60 & 33.26 & 2.55 \\
    \midrule
    \multicolumn{10}{l}{\textsc{Stronger RAG Pipelines}} \\
    RAG (Top-3) & Qwen3-8B-Emb & 36.42 & 2.85 & 37.91 & 2.78 & 34.05 & 3.55 & 35.62 & 3.42 \\
    RAG (Top-5) & Qwen3-8B-Emb & 37.18 & 4.05 & 38.65 & 3.92 & 35.21 & 4.85 & 36.84 & 4.68 \\
    RAG (Top-10) & Qwen3-8B-Emb & 36.85 & 6.45 & 38.34 & 6.22 & 35.94 & 7.62 & 37.61 & 7.35 \\
    \,\,+Rerank (BGE-v2-m3) & Qwen3-8B-Emb & 38.73 & 3.45 & 40.21 & 3.32 & 37.42 & 4.18 & 39.18 & 4.05 \\
    \,\,+Rerank (Qwen3-Rerank) & Qwen3-8B-Emb & \textbf{39.51} & 4.05 & \textbf{41.05} & 3.88 & 38.26 & 4.85 & \textbf{39.82} & 4.68 \\
    \,\,+HyDE & Qwen3-8B-Emb & 36.94 & 5.15 & 38.42 & 4.92 & 34.85 & 6.75 & 36.51 & 6.48 \\
    \,\,+Query Rewrite & Qwen3-8B-Emb & 35.62 & 3.85 & 37.18 & 3.68 & 33.74 & 4.72 & 35.42 & 4.55 \\
    \midrule
    \multicolumn{10}{l}{\textsc{Knowledge Injection Without External Search}} \\
    Lora-Tuning & None & 21.37 & \textbf{0.34} & 22.12 & \textbf{0.32} & 22.70 & 0.65 & 26.37 & 0.60 \\
    Full-Tuning & None & 26.92 & 0.39 & 27.06 & 0.34 & 28.78 & \textbf{0.60} & 31.83 & \textbf{0.58} \\
    KBLAM & None & 29.13 & 2.25 & 28.32 & 2.15 & 33.21 & 3.65 & 33.48 & 2.98 \\
    SR-KI & None & 31.85 & 2.03 & 29.22 & 2.09 & 36.65 & 3.20 & 33.50 & 2.85 \\
    LAG & None & 24.51 & 0.78 & 23.86 & 0.82 & 28.34 & 1.18 & 27.62 & 1.12 \\
    MLP Memory & None & 28.45 & 0.55 & 27.34 & 0.52 & 32.18 & 0.88 & 30.95 & 0.85 \\
    AtlasKV & None & 32.46 & 1.85 & 30.18 & 1.72 & 37.62 & 2.65 & 34.85 & 2.48 \\
    \rowcolor{gray!15} \textbf{RING (ours)} & None & \underline{35.08} & 0.40 & \underline{32.36} & 0.49 & \textbf{41.19} & 0.78 & \underline{37.92} & 0.75 \\
    \bottomrule
    \end{tabular}
}
\end{table*}
\subsection{Baselines}
We group baselines into three categories (Table~\ref{tab:main_results_combined}; full descriptions in Appendix~\ref{baselines-details}):
\textbf{(i) Conventional RAG (Top-1)} with SOTA embedders BGE-M3~\cite{chen2024bge}, OpenAI-Emb-3L, Gemini-Embedding~\cite{lee2025gemini}, and Qwen3-Embedding~\cite{zhang2025qwen3};
\textbf{(ii) Stronger RAG Pipelines} (all using Qwen3-8B-Emb): Top-$k$ retrieval with $k\in\{3,5,10\}$, cross-encoder reranking on top-10 to top-3 (BGE-Reranker-v2-m3~\cite{li2023making} and Qwen3-Reranker-4B~\cite{zhang2025qwen3}), HyDE~\cite{gao2023precise}, and base-LLM Query Rewrite;
\textbf{(iii) Parametric Knowledge Injection}: Full Tuning, LoRA Tuning, modular methods KBLAM~\cite{wang2025kblam} and SR-KI~\cite{yu2025sr}, and three recent retriever-imitating methods closest to RING in spirit---LAG~\cite{fleshman2025lora} (per-document LoRA + data-free routing), MLP Memory~\cite{wei2025mlp} ($k$NN-distribution imitation), and AtlasKV~\cite{huang2025atlaskv} (compressed KV-cache injection). Unlike all parametric baselines whose retrieval mechanism is hand-designed and frozen, RING (i) keeps a separate Basic Expert to prevent forgetting, (ii) injects via bidirectional DCA, and (iii) learns retrieval end-to-end via RL with dense rewards.



\subsection{Main Results}
\label{sec:main_results}
The specific experimental settings can be found in Section~\ref{exp_setting}. The main results are in Table~\ref{tab:main_results_combined}.

\noindent\textbf{vs.\ Parametric Knowledge Injection.}
RING dominates every parametric injection baseline on both languages and both backbones. With Qwen3-8B, RING reaches 35.08\%/32.36\% (EN/ZH), exceeding the strongest prior method SR-KI by 3.23 and 3.14 points. Among the three recently proposed retriever-imitating baselines, AtlasKV is the closest competitor since KV-level injection retains finer-grained semantics than logit-level interpolation; however, its KG2KV pipeline loses information when news passages cannot be cleanly decomposed into $\langle h,r,t\rangle$ triples, and hierarchical KV pruning still incurs non-trivial attention overhead (TTFT 1.85s vs.\ 0.40s). MLP Memory is fast (TTFT 0.55s, comparable to Full-Tuning) but its probability-interpolation interface cannot resolve fine-grained factual queries with the same precision as KV injection. LAG is the weakest of the three: data-free per-token adapter routing was designed for libraries on the order of $10^3$ adapters, and false selections compound as the library scales to our 170k-document corpus. The advantage widens on Qwen3-14B, where RING exceeds every parametric baseline by at least 3.5 points.

\noindent\textbf{vs.\ External-Search RAG.}
RING occupies the accuracy--latency Pareto frontier. On the 8B backbone, the strongest pipeline (Qwen3-Reranker on top-10, output top-3) surpasses RING by 4.43 EN/8.69 ZH points but at $10\times$ TTFT (4.05s vs.\ 0.40s); on 14B the trade-off flips---RING (41.19/37.92) beats this same pipeline on EN and trails by under 2 points on ZH while remaining $6\times$ faster. Lighter Top-$k$ retrieval plateaus near 37\% on 8B with diminishing returns ($k{=}10$ pushes TTFT to 6.45s); HyDE and Query Rewrite add LLM forwards without statistically meaningful gains over Top-3, since the base LLM cannot expand queries for post-cutoff news.

\noindent\textbf{Latency.}
RING's TTFT matches Full-Tuning---the only other paradigm without inference-time computation beyond a forward pass---and is the lowest among all evaluated methods: $3$--$19\times$ faster than every RAG variant and $3$--$6\times$ faster than KV-injection baselines, confirming that RING internalizes retrieval at minimal serving cost.

\begin{figure}[t]
    \centering
    \begin{subfigure}{0.48\linewidth}
        \centering
        \includegraphics[width=\linewidth]{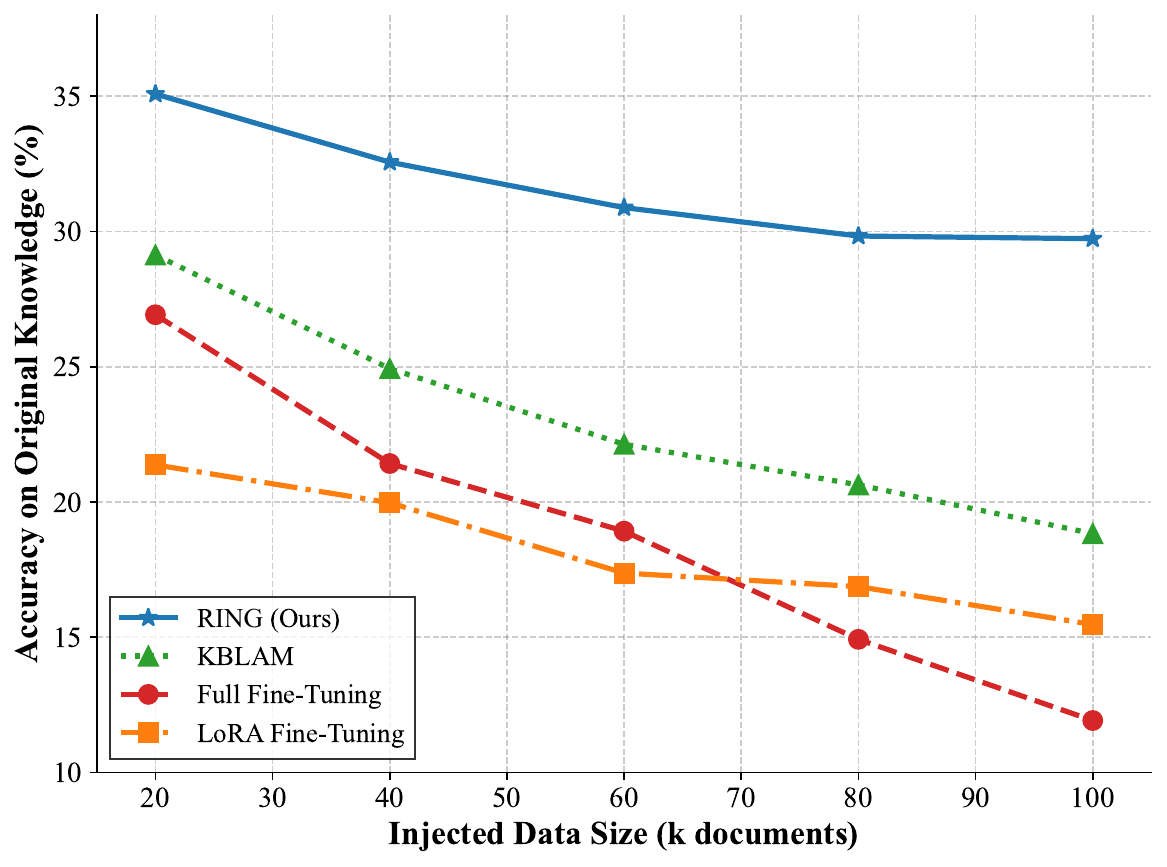}
        \caption{Sequential Injection}
        \label{fig:sequential}
    \end{subfigure}
    \hfill 
    \begin{subfigure}{0.48\linewidth}
        \centering
        \includegraphics[width=\linewidth]{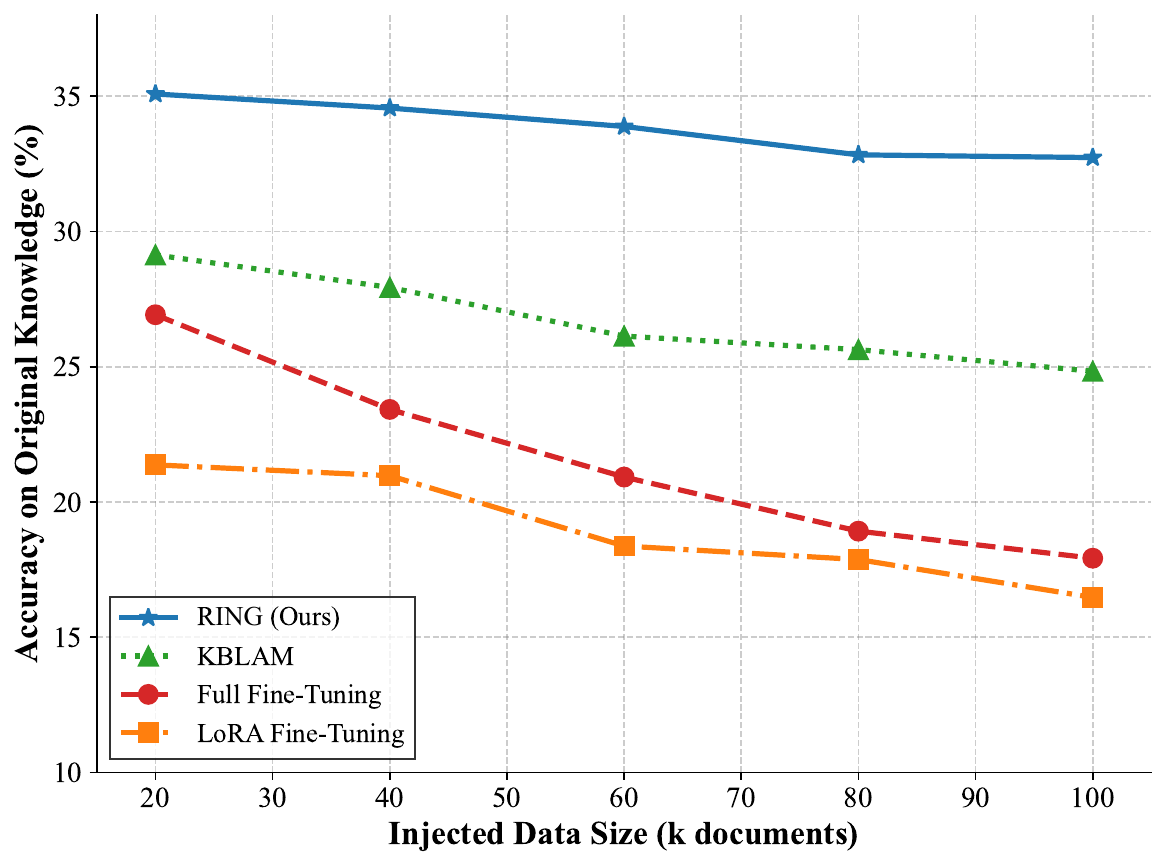}
        \caption{Mixed Training}
        \label{fig:mixed}
    \end{subfigure}
    
    \caption{Comparison of knowledge retention strategies. 
    \textbf{(a)} Sequential injection means the model is trained strictly on the sequence of new knowledge. 
    \textbf{(b)} Mixed training means introduces a
replay buffer of original data during training to mitigate forgetting.}
    \label{fig:forgetting_comparison}
\end{figure}
\subsection{Analysis}
\paragraph{Incremental Updating of New Knowledge}
We evaluate robustness to catastrophic forgetting by injecting 20k to 100k documents and comparing against baselines (Figure~\ref{fig:forgetting_comparison}) under two settings. \textbf{(1) Sequential Injection:} training only on new data causes severe forgetting for conventional methods: \textit{Full Fine-Tuning} drops from $\sim$27\% to $\sim$12\% at 100k, with \textit{LoRA} and \textit{KBLAM} also degrading noticeably. In contrast, RING remains stable, keeping accuracy above 30\%, indicating effective isolation of new memories via parameter separation. \textbf{(2) Mixed Training:} adding replay flattens the baselines’ curves, but RING still achieves the best overall retention, staying $>33\%$ while \textit{KBLAM} and \textit{Full Fine-Tuning} plateau around $\sim$25\% to $\sim$18\%.

\begin{table}[t]
\centering
\caption{Ablation study of RING. Performance is reported as the accuracy (\%) on English dataset.}
\label{tab:ablation}

\resizebox{\columnwidth}{!}{
    \setlength{\tabcolsep}{15pt} 
    \renewcommand{\arraystretch}{1.00}

    \newcommand{\baseblock}[1]{%
      \rowcolor{gray!25}%
      \multicolumn{2}{l}{\textbf{\textit{#1}}} \\}
\scalebox{1.00}{
    \begin{tabular}{lc}
    \toprule
    \textbf{Configuration} & \textbf{Accuracy ($\uparrow$)} \\
    \midrule

    \baseblock{1. Impact of Model Architecture}
 
    Dense (Standard) & 30.59 \\
    Sparse (MoE) & 32.71 \\
    Sparse + Gate (Ours) & \textbf{35.08} \\
    \midrule

    \baseblock{2. Impact of Attention Mechanism}

    Uni-directional Attention (Standard) & 33.49 \\
    Dual Causal Attention (Ours) & \textbf{35.08} \\
    \midrule
    
    \baseblock{3. Impact of Training Stages}

    SFT Only  & 28.41 \\
    SFT + RL  & \textbf{35.08} \\
    \midrule

    \baseblock{4. Impact of Reward Components in RL}

    RL w/ $R_{\text{format}}$ + $R_{\text{answer}}$ (Answer Reward) & 32.17 \\
    RL w/ $R_{\text{format}}$ + $R_{\text{answer}}$ +  $R_{\text{search}}$ (Search Reward) & \textbf{35.08} \\
    \bottomrule
    \end{tabular}
    }
}
\end{table}

\paragraph{Ablation Study}
We ablate key components of RING in Table~\ref{tab:ablation}. {(1) Architecture:} Sparse MoE improves 30.59\%$\rightarrow$32.71\%, and adding our gating further boosts to 35.08\%. {(2) Attention:} DCA outperforms standard causal attention (35.08\% vs. 33.49\%). {(3) Training:} RL is essential---SFT only reaches 28.41\%, while SFT+RL achieves 35.08\%. {(4) Rewards:} search reward beats answer-only reward (35.08\% vs. 32.17\%), showing the benefit of fine-grained retrieval supervision.

\begin{figure}[t]
    \centering
    \begin{subfigure}{0.48\linewidth}
        \centering
{\includegraphics[width=\columnwidth]{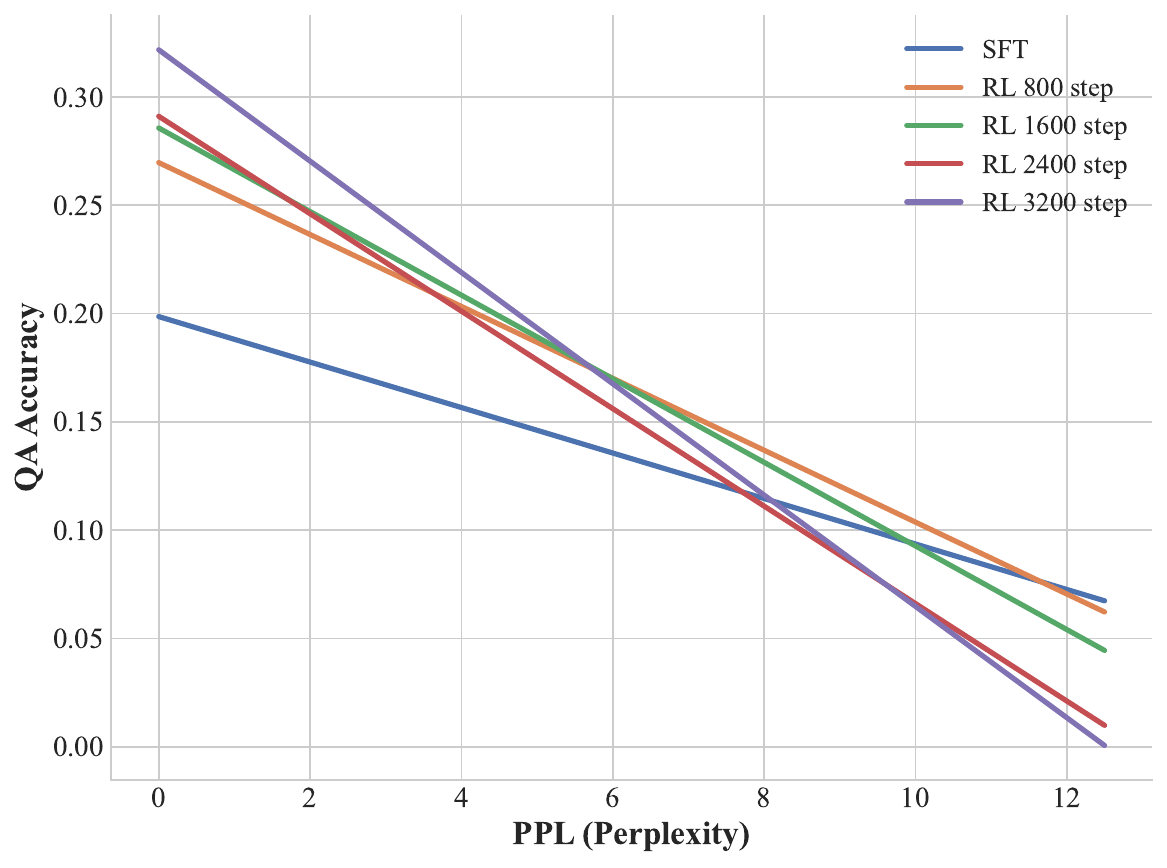}}
\caption{PPL-Accuracy mapping through RL training steps. }
        \label{fig:effect_of_rl_a}
    \end{subfigure}
    \hfill 
    \begin{subfigure}{0.48\linewidth}
        \centering
        \includegraphics[width=\linewidth]{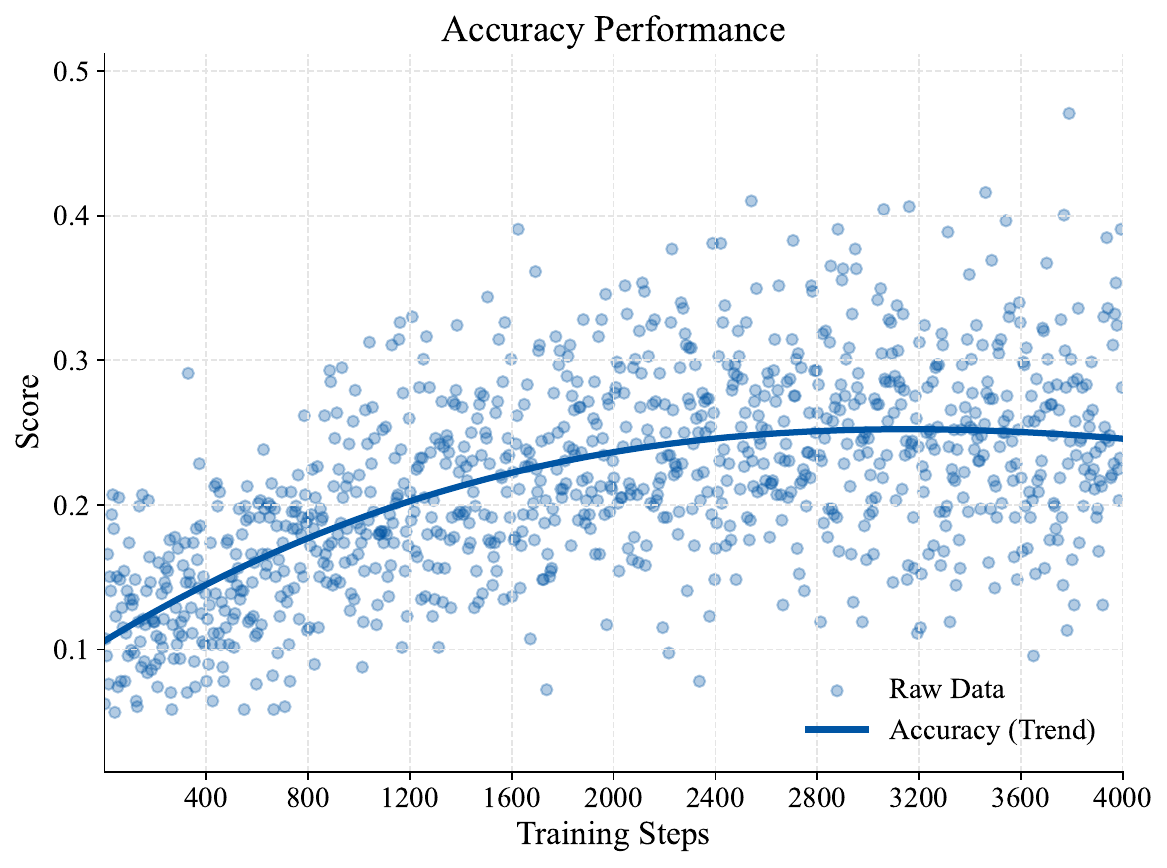}
        \caption{(Q\&A accuracy) varies by the RL training steps.}
        \label{fig:effect_of_rl_b}
    \end{subfigure}
    
    \caption{(a) We plot the linear relationship between Perplexity ($x$) and Q\&A Accuracy ($y$) for SFT and four RL checkpoints. (b) We show the training dynamics of Q\&A accuracy in RL training.}
    \label{fig:effect_of_rl}
\end{figure}

\paragraph{Effect of RL}
To investigate how RL facilitates the utilization of parametric knowledge, we analyze the relationship between memorization quality (PPL) and QA accuracy. Figure~\ref{fig:effect_of_rl_a} shows two key findings. {(1) Better utilization under good memorization:} in the low-PPL region, RL markedly improves accuracy (SFT $\sim$20\% $\rightarrow$ RL $>32\%$), indicating that memorization alone is not sufficient and RL helps the model reliably activate and use stored knowledge. {(2) Stronger grounding on parametric memory:} as RL proceeds, the PPL--accuracy curve becomes much steeper---accuracy becomes tightly coupled with memorization quality. When PPL is high, the RL model degrades sharply, suggesting reduced reliance on pre-trained priors and fewer ``lucky guesses.'' RL bridges memorization and effective retrieval-based usage of parametric knowledge.

\section{Conclusion}
We introduce RING, a fully parametric, search-free RAG framework that unifies indexing, retrieval, and generation inside a LLM. RING injects large-scale new knowledge into a dedicated Knowledge Expert via continued pre-training with Dual Causal Attention, and learns to \emph{search-then-answer} through supervised fine-tuning and reinforcement learning. We further provide a latent-variable view connecting RING to the classical RAG objective, showing it as a search-free approximation with a principled inference interpretation. Experiments show that RING matches or exceeds search-based RAG baselines while reducing inference latency.

\section*{Limitations}

First, RING targets settings where a large knowledge corpus is relatively stable and can be
internalized before deployment. Unlike conventional RAG systems, which can update an external
index by adding or deleting documents, RING requires additional training to refresh the
parametric memory. This makes it less suitable for highly dynamic knowledge sources unless
efficient incremental update strategies are developed.

Second, although RING generates an explicit \texttt{<retrieval>} block, this generated evidence is
not identical to returning a verbatim document from an external database. The model may compress,
paraphrase, or occasionally distort memorized content, which can make auditing and provenance
tracking harder than in search-based RAG. Applications that require strict citation fidelity or
legal traceability may still benefit from hybrid designs that combine parametric retrieval with
external verification.

Third, our experiments focus on knowledge-intensive question answering over a constructed news
corpus. While this setting stresses large-scale memorization and evidence selection, it does not
cover all retrieval scenarios, such as multi-hop web search, rapidly changing facts, multilingual
knowledge bases, or long-form generation with many cited sources. Extending RING to these settings
is an important direction for future work.

Finally, the proposed training pipeline introduces non-trivial upfront cost. Continued
pre-training with Dual Causal Attention, supervised search-then-answer learning, and RL with dense
retrieval rewards require carefully constructed data and substantial computation. The inference
benefits of search-free generation should therefore be weighed against the cost of building and
maintaining the parametric memory.

\section*{Ethical Considerations}
This paper proposes RING, a search-free retrieval-augmented generation framework that internalizes indexing and retrieval into model parameters to reduce latency and deployment complexity. The primary positive impact is enabling more efficient and accessible knowledge-augmented systems, especially in settings where maintaining external retrieval infrastructure is costly or impractical.

Potential risks are similar to those of other LLM-based systems. Because retrieval is performed parametrically, errors in memorization or retrieval may lead to confident but incorrect outputs, and harmful or biased content present in the injected corpus could be reinforced or surfaced during generation. In addition, internalizing knowledge may make it harder to audit, trace, or remove specific information compared to external retrieval, raising concerns around data governance and privacy if sensitive data were injected.

To mitigate these risks, RING should be used with careful dataset curation and filtering, privacy-preserving data handling, and evaluation for bias and safety. Where high-stakes decisions are involved, we recommend adding safeguards such as refusal policies, post-hoc verification, and provenance-aware logging or optional external validation. We hope this work encourages further research on controllable, auditable, and safe parametric knowledge integration.
\bibliography{main}

@article{zhang2024comprehensive,
  title={A comprehensive study of knowledge editing for large language models},
  author={Zhang, Ningyu and Yao, Yunzhi and Tian, Bozhong and Wang, Peng and Deng, Shumin and Wang, Mengru and Xi, Zekun and Mao, Shengyu and Zhang, Jintian and Ni, Yuansheng and others},
  journal={arXiv preprint arXiv:2401.01286},
  year={2024}
}

@article{lewis2020retrieval,
  title={Retrieval-augmented generation for knowledge-intensive nlp tasks},
  author={Lewis, Patrick and Perez, Ethan and Piktus, Aleksandra and Petroni, Fabio and Karpukhin, Vladimir and Goyal, Naman and K{\"u}ttler, Heinrich and Lewis, Mike and Yih, Wen-tau and Rockt{\"a}schel, Tim and others},
  journal={Advances in neural information processing systems},
  volume={33},
  pages={9459--9474},
  year={2020}
}

@inproceedings{izacard2020leveraging,
  title={Leveraging passage retrieval with generative models for open domain question answering},
  author={Izacard, Gautier and Grave, Edouard},
  booktitle={Proceedings of the 16th conference of the european chapter of the association for computational linguistics: main volume},
  pages={874--880},
  year={2021}
}

@article{li2025matching,
  title={From matching to generation: A survey on generative information retrieval},
  author={Li, Xiaoxi and Jin, Jiajie and Zhou, Yujia and Zhang, Yuyao and Zhang, Peitian and Zhu, Yutao and Dou, Zhicheng},
  journal={ACM Transactions on Information Systems},
  volume={43},
  number={3},
  pages={1--62},
  year={2025},
  publisher={ACM New York, NY}
}

@article{diao2023survey,
  title={Continual learning of large language models: A comprehensive survey},
  author={Shi, Haizhou and Xu, Zihao and Wang, Hengyi and Qin, Weiyi and Wang, Wenyuan and Wang, Yibin and Wang, Zifeng and Ebrahimi, Sayna and Wang, Hao},
  journal={ACM Computing Surveys},
  volume={58},
  number={5},
  pages={1--42},
  year={2025},
  publisher={ACM New York, NY}
}

@article{deng2026latent,
  title={Latent-GRPO: Group relative policy optimization for latent reasoning},
  author={Deng, Jingcheng and Wei, Zihao and Pang, Liang and Wu, Junhong and Xu, Shicheng and Duan, Zenghao and Shen, Huawei},
  journal={arXiv preprint arXiv:2604.27998},
  year={2026}
}

@article{deng2025latent,
  title={Latent reasoning in llms as a vocabulary-space superposition},
  author={Deng, Jingcheng and Pang, Liang and Wei, Zihao and Xu, Shicheng and Duan, Zenghao and Xu, Kun and Song, Yang and Shen, Huawei and Cheng, Xueqi},
  year={2025}
}

@article{wei2026dynamic,
  title={Dynamic Rollout Editing for Reducing Overthinking in RL-Trained Reasoning Models},
  author={Wei, Zihao and Shi, Wenjie and Pang, Liang and Deng, Jingcheng and Xu, Shicheng and Guo, Shasha and Duan, Zenghao and Liu, Jiahao and Wang, Jingang and Shen, Huawei and others},
  journal={arXiv preprint arXiv:2606.17890},
  year={2026}
}

@article{duan2026skillattack,
  title={Skillattack: Automated red teaming of agent skills through attack path refinement},
  author={Duan, Zenghao and Tian, Yuxin and Yin, Zhiyi and Pang, Liang and Deng, Jingcheng and Wei, Zihao and Xu, Shicheng and Ge, Yuyao and Cheng, Xueqi},
  journal={arXiv preprint arXiv:2604.04989},
  year={2026}
}

@article{meng2022memit,
  title={Mass-editing memory in a transformer},
  author={Meng, Kevin and Sharma, Arnab Sen and Andonian, Alex and Belinkov, Yonatan and Bau, David},
  journal={arXiv preprint arXiv:2210.07229},
  year={2022}
}

@inproceedings{wang2025kblam,
  title={Kblam: Knowledge base augmented language model},
  author={Wang, Xi and Isazawa, Taketomo and Mikaelyan, Liana and Hensman, James},
  booktitle={International Conference on Learning Representations},
  volume={2025},
  pages={51629--51658},
  year={2025}
}

@article{yue2025does,
  title={Does reinforcement learning really incentivize reasoning capacity in llms beyond the base model?},
  author={Chen, Zhiqi and Lu, Rui and Zhao, Andrew and Wang, Zhaokai and Yue, Yang and Song, Shiji and Huang, Gao},
  journal={Advances in Neural Information Processing Systems},
  volume={38},
  pages={57654--57689},
  year={2026}
}

@article{nguyen2019toward,
  title={Toward understanding catastrophic forgetting in continual learning},
  author={Nguyen, Cuong V and Achille, Alessandro and Lam, Michael and Hassner, Tal and Mahadevan, Vijay and Soatto, Stefano},
  journal={arXiv preprint arXiv:1908.01091},
  year={2019}
}

@inproceedings{deng2024everything,
  title={Everything is editable: Extend knowledge editing to unstructured data in large language models},
  author={Deng, Jingcheng and Wei, Zihao and Pang, Liang and Ding, Hanxing and Shen, Huawei and Cheng, Xueqi},
  booktitle={International Conference on Learning Representations},
  volume={2025},
  pages={628--652},
  year={2025}
}

@article{wang2024comprehensive,
  title={A comprehensive survey of continual learning: Theory, method and application},
  author={Wang, Liyuan and Zhang, Xingxing and Su, Hang and Zhu, Jun},
  journal={IEEE transactions on pattern analysis and machine intelligence},
  volume={46},
  number={8},
  pages={5362--5383},
  year={2024},
  publisher={IEEE}
}

@inproceedings{yu2025sr,
  title={SR-KI: Scalable and Real-Time Knowledge Integration into LLMs via Supervised Attention},
  author={Yu, Bohan and Huang, Wei and Liu, Kang},
  booktitle={Proceedings of the AAAI Conference on Artificial Intelligence},
  volume={40},
  number={41},
  pages={34486--34494},
  year={2026}
}

@article{zheng2025group,
  title={Group sequence policy optimization},
  author={Zheng, Chujie and Liu, Shixuan and Li, Mingze and Chen, Xiong-Hui and Yu, Bowen and Gao, Chang and Dang, Kai and Liu, Yuqiong and Men, Rui and Yang, An and others},
  journal={arXiv preprint arXiv:2507.18071},
  year={2025}
}

@inproceedings{hu2024towards,
  title={Towards understanding factual knowledge of large language models},
  author={Hu, Xuming and Chen, Junzhe and Li, Xiaochuan and Guo, Yufei and Wen, Lijie and Yu, Philip and Guo, Zhijiang},
  booktitle={International Conference on Learning Representations},
  volume={2024},
  pages={28680--28715},
  year={2024}
}

@misc{li2023making,
      title={Making Large Language Models A Better Foundation For Dense Retrieval}, 
      author={Chaofan Li and Zheng Liu and Shitao Xiao and Yingxia Shao},
      year={2023},
      eprint={2312.15503},
      archivePrefix={arXiv},
      primaryClass={cs.CL}
}

@inproceedings{xu2024unsupervised,
  title={Unsupervised information refinement training of large language models for retrieval-augmented generation},
  author={Xu, Shicheng and Pang, Liang and Yu, Mo and Meng, Fandong and Shen, Huawei and Cheng, Xueqi and Zhou, Jie},
  booktitle={Proceedings of the 62nd annual meeting of the association for computational linguistics (volume 1: Long papers)},
  pages={133--145},
  year={2024}
}

@inproceedings{xu2025theory,
  title={A theory for token-level harmonization in retrieval-augmented generation},
  author={Xu, Shicheng and Pang, Liang and Shen, Huawei and Cheng, Xueqi},
  booktitle={International Conference on Learning Representations},
  volume={2025},
  pages={36616--36642},
  year={2025}
}

@article{wang2023survey,
  title={Survey on factuality in large language models: Knowledge, retrieval and domain-specificity},
  author={Wang, Cunxiang and Liu, Xiaoze and Yue, Yuanhao and Tang, Xiangru and Zhang, Tianhang and Jiayang, Cheng and Yao, Yunzhi and Gao, Wenyang and Hu, Xuming and Qi, Zehan and others},
  journal={arXiv preprint arXiv:2310.07521},
  year={2023}
}

@inproceedings{xu2024search,
  title={Search-in-the-chain: Interactively enhancing large language models with search for knowledge-intensive tasks},
  author={Xu, Shicheng and Pang, Liang and Shen, Huawei and Cheng, Xueqi and Chua, Tat-Seng},
  booktitle={Proceedings of the ACM Web Conference 2024},
  pages={1362--1373},
  year={2024}
}

@inproceedings{su2025parametric,
  title={Parametric retrieval augmented generation},
  author={Su, Weihang and Tang, Yichen and Ai, Qingyao and Yan, Junxi and Wang, Changyue and Wang, Hongning and Ye, Ziyi and Zhou, Yujia and Liu, Yiqun},
  booktitle={Proceedings of the 48th International ACM SIGIR Conference on Research and Development in Information Retrieval},
  pages={1240--1250},
  year={2025}
}

@article{gao2023retrieval,
  title={Retrieval-augmented generation for large language models: A survey},
  author={Gao, Yunfan and Xiong, Yun and Gao, Xinyu and Jia, Kangxiang and Pan, Jinliu and Bi, Yuxi and Dai, Yixin and Sun, Jiawei and Wang, Haofen and Wang, Haofen and others},
  journal={arXiv preprint arXiv:2312.10997},
  volume={2},
  number={1},
  pages={32},
  year={2023}
}

@article{wang2024greater,
  title={With greater text comes greater necessity: Inference-time training helps long text generation},
  author={Wang, Yan and Ma, Dongyang and Cai, Deng},
  journal={arXiv preprint arXiv:2401.11504},
  year={2024}
}

@article{chen2024bge,
  title={Bge m3-embedding: Multi-lingual, multi-functionality, multi-granularity text embeddings through self-knowledge distillation},
  author={Chen, Jianlv and Xiao, Shitao and Zhang, Peitian and Luo, Kun and Lian, Defu and Liu, Zheng},
  journal={arXiv preprint arXiv:2402.03216},
  volume={4},
  number={5},
  year={2024}
}

@article{zhang2025bidirectional,
  title={Bidirectional lms are better knowledge memorizers? a benchmark for real-world knowledge injection},
  author={Zhang, Yuwei and Yu, Wenhao and Feng, Shangbin and Zhu, Yifan and Peng, Letian and Srinivasa, Jayanth and Liu, Gaowen and Shang, Jingbo},
  journal={arXiv preprint arXiv:2505.12306},
  year={2025}
}

@inproceedings{geva2021transformer,
  title={Transformer feed-forward layers are key-value memories},
  author={Geva, Mor and Schuster, Roei and Berant, Jonathan and Levy, Omer},
  booktitle={Proceedings of the 2021 Conference on Empirical Methods in Natural Language Processing},
  pages={5484--5495},
  year={2021}
}

@article{bevilacqua2022autoregressive,
  title={Autoregressive search engines: Generating substrings as document identifiers},
  author={Bevilacqua, Michele and Ottaviano, Giuseppe and Lewis, Patrick and Yih, Scott and Riedel, Sebastian and Petroni, Fabio},
  journal={Advances in Neural Information Processing Systems},
  volume={35},
  pages={31668--31683},
  year={2022}
}

@article{tay2022transformer,
  title={Transformer memory as a differentiable search index},
  author={Tay, Yi and Tran, Vinh and Dehghani, Mostafa and Ni, Jianmo and Bahri, Dara and Mehta, Harsh and Qin, Zhen and Hui, Kai and Zhao, Zhe and Gupta, Jai and others},
  journal={Advances in neural information processing systems},
  volume={35},
  pages={21831--21843},
  year={2022}
}

@article{zhang2025qwen3,
  title={Qwen3 embedding: Advancing text embedding and reranking through foundation models},
  author={Zhang, Yanzhao and Li, Mingxin and Long, Dingkun and Zhang, Xin and Lin, Huan and Yang, Baosong and Xie, Pengjun and Yang, An and Liu, Dayiheng and Lin, Junyang and others},
  journal={arXiv preprint arXiv:2506.05176},
  year={2025}
}

@inproceedings{gao2023precise,
  title={Precise zero-shot dense retrieval without relevance labels},
  author={Gao, Luyu and Ma, Xueguang and Lin, Jimmy and Callan, Jamie},
  booktitle={Proceedings of the 61st Annual Meeting of the Association for Computational Linguistics (Volume 1: Long Papers)},
  pages={1762--1777},
  year={2023}
}

@article{huang2025atlaskv,
  title={AtlasKV: Augmenting LLMs with Billion-Scale Knowledge Graphs in 20GB VRAM},
  author={Huang, Haoyu and Tsang, Hong Ting and Bai, Jiaxin and Peng, Xi and Zhang, Gong and Song, Yangqiu},
  journal={arXiv preprint arXiv:2510.17934},
  year={2025}
}

@article{wei2025mlp,
  title={Mlp memory: A retriever-pretrained memory for large language models},
  author={Wei, Rubin and Cao, Jiaqi and Wang, Jiarui and Kai, Jushi and Guo, Qipeng and Zhou, Bowen and Lin, Zhouhan},
  journal={arXiv preprint arXiv:2508.01832},
  year={2025}
}

@article{fleshman2025lora,
  title={LoRA-Augmented Generation (LAG) for Knowledge-Intensive Language Tasks},
  author={Fleshman, William and Van Durme, Benjamin},
  journal={arXiv preprint arXiv:2507.05346},
  year={2025}
}

@article{lee2025gemini,
  title={Gemini embedding: Generalizable embeddings from gemini},
  author={Lee, Jinhyuk and Chen, Feiyang and Dua, Sahil and Cer, Daniel and Shanbhogue, Madhuri and Naim, Iftekhar and {\'A}brego, Gustavo Hern{\'a}ndez and Li, Zhe and Chen, Kaifeng and Vera, Henrique Schechter and others},
  journal={arXiv preprint arXiv:2503.07891},
  year={2025}
}

\appendix

\section{Theoretical Framework}
\label{sec:theory}

This section provides a principled view of \textbf{RING} as a \emph{search-free} approximation to the classical Retrieval-Augmented Generation (RAG) objective.
Our key message is: (i) classical RAG can be written as marginalizing a discrete latent \emph{evidence} variable over a corpus; (ii) RING parameterizes this latent retrieval distribution entirely within the model (router + knowledge expert), after injecting the corpus into $\Theta_{\mathrm{mem}}$ via CPT; and (iii) our CPT--SFT--RL pipeline can be interpreted as optimizing a variational lower bound and an evidence-selection policy.

\subsection{Problem Formulation: Discrete Latent Evidence for RAG}
Let the external knowledge corpus be $\mathcal{D}$, and let
\[
\mathcal{U}=\{u_z\}_{z=1}^{N}
\]
denote the set of \emph{memory units} derived from $\mathcal{D}$ after preprocessing (Section~\ref{CPT}): each $u_z$ is the JSON-style fragment
\texttt{\{title, content\}} (possibly truncated) used consistently in CPT/SFT/RL.
Given a query $x$ and target answer $y$, we model knowledge-intensive generation with a discrete latent evidence index $z\in\{1,\dots,N\}$:
\begin{equation}
p(y\mid x,\mathcal{U})
=
\sum_{z=1}^{N} p(z\mid x,\mathcal{U})\; \pgen(y\mid x, u_z).
\label{eq:app_rag_latent}
\end{equation}
Here $p(z\mid x,\mathcal{U})$ is the retrieval distribution, and
$\pgen(y\mid x,u_z)$ is the generator conditioned on evidence $u_z$~\cite{xu2025theory,xu2024unsupervised,duan2026skillattack}.

\paragraph{Explicit (search-based) RAG as truncated marginalization}
In conventional RAG, $p(z\mid x,\mathcal{U})$ is implemented by an external retriever (e.g., embedding similarity + ANN search) with score $s_\eta(x,u_z)$:
\[
p_\eta(z\mid x,\mathcal{U})
=
\frac{\exp(s_\eta(x,u_z))}{\sum_{j=1}^N \exp(s_\eta(x,u_j))}.
\]
Since $N$ is large, systems typically retrieve a top-$K$ set $\mathcal{T}_K(x)$ and renormalize:
\begin{equation}
p_{\mathrm{RAG}}(y\mid x,\mathcal{U})
\approx
\!\!\!\sum_{z\in\mathcal{T}_K(x)}\!\!\!
\tilde p_\eta(z\mid x,\mathcal{U})\, \pgen(y\mid x,u_z),
\label{eq:app_rag_topk}
\end{equation}
where $\tilde p_\eta(z)=p_\eta(z)/\sum_{z'\in\mathcal{T}_K(x)}p_\eta(z')$.

\subsection{RING Objective: Search-Free Parametric Retrieval}
RING eliminates external access to $\mathcal{U}$ at inference by \emph{(i)} injecting $\mathcal{U}$ into the knowledge expert parameters during CPT, and \emph{(ii)} learning a parametric retrieval policy during SFT/RL.

\begin{definition}[RING Latent Evidence Objective]
\label{def:app_RING_obj}
Let $\Theta=(\bar{\Theta},\Theta_{\mathrm{mem}},\phi)$, where $\bar{\Theta}$ denotes all frozen backbone parameters (including the base expert and attention blocks), $\Theta_{\mathrm{mem}}=\{\Wkd,\Wku\}_{\ell=1}^{L}$ are the trainable memory projections in the knowledge expert (Section~\ref{CPT}), and $\phi$ are the parameters of the router $r_\phi$ (and the Search Gate in $E_{\text{knw}}^{s}$).
We define the parametric joint model
\begin{equation}
p_{\Theta}(y,z\mid x)
=
p_{\phi}(z\mid x)\;\; \pgen(y\mid x,u_z;\Theta_{\mathrm{mem}},\bar{\Theta}),
\label{eq:app_joint}
\end{equation}
and the marginal
\begin{equation}
p_{\Theta}(y\mid x)=\sum_{z=1}^{N} p_{\phi}(z\mid x)\; \pgen(y\mid x,u_z;\Theta_{\mathrm{mem}},\bar{\Theta}).
\label{eq:app_marginal}
\end{equation}
At inference, RING does not query $\mathcal{U}$ externally; the dependence on $\mathcal{U}$ is realized only through the injected parameters $\Theta_{\mathrm{mem}}$.
\end{definition}

\paragraph{Connection to \texttt{<retrieval>} generation}
In the \emph{Search-then-Answer} paradigm (Section~\ref{sft}), RING explicitly generates a retrieval block that is intended to reproduce (one of) the memorized units $u_z$.
Thus, we can interpret $p_{\phi}(z\mid x)$ as the induced probability that the model generates unit $u_z$ inside \texttt{<retrieval>}...\texttt{</retrieval>} given the question $x$.

\paragraph{Top-1 (MAP) evidence approximation and an error bound}
In practice, RING often aims to retrieve a single best unit (top-1) before answering.
Define a score
\[
a_z \;=\; \log p_{\phi}(z\mid x) + \log \pgen(y\mid x,u_z;\Theta_{\mathrm{mem}},\bar{\Theta}),
\]
and let $z^\star=\arg\max_z a_z$.
Then log-sum-exp yields a standard approximation bound:
\begin{align}
0 \;\le\;
&\log \sum_{z=1}^{N} e^{a_z} - a_{z^\star} \nonumber\\
=\;
&\log\!\Big(1+\!\!\sum_{z\ne z^\star}\!\! e^{a_z-a_{z^\star}}\Big) \nonumber\\
\le\;
&\log\!\big(1+(N-1)e^{-\Delta}\big),
\label{eq:app_map_gap}
\end{align}
where $\Delta=a_{z^\star}-\max_{z\ne z^\star} a_z$ is the score gap.
Therefore, when the evidence posterior is peaked (large $\Delta$), the top-1 evidence approximation is provably close to the full marginal in Eq.~\eqref{eq:app_marginal}.

\subsection{Architecture View: Knowledge Expert as Parametric Key--Value Retrieval}
We now connect RING's knowledge expert to a differentiable retrieval mechanism.

\paragraph{Single-layer view}
Consider the knowledge expert MLP at layer $\ell$ applied to a hidden state $h\in\mathbb{R}^{d}$.
With the Memory Up/Down and Search Gate separation (Section~\ref{model_arch}), a convenient abstraction is the gated-MLP form (e.g., SiLU-gating):
\begin{align}
E^{(\ell)}_{\text{knw}}(h)
&=
\Wkd \Big(
\mathrm{SiLU}\big(\Wkg\, h\big) \nonumber \\
&\quad \odot\
\big(\Wku\, h\big)
\Big),
\label{eq:app_knw_gated_mlp}
\end{align}
where $\Wkd$ and $\Wku$ belong to $\Theta_{\mathrm{mem}}$ (trainable in CPT), while $\Wkg$ corresponds to the Search Gate parameters in $E_{\text{knw}}^{s}$ (trainable in SFT/RL).

\begin{proposition}[Gated-MLP as Key--Value Memory Readout]
\label{prop:app_kv_readout}
Let $d_{\mathrm{ff}}$ be the intermediate width of the MLP, and denote by
$k^{(\ell)}_i$ the $i$-th row of $\Wku$,
by $g^{(\ell)}_i$ the $i$-th row of $\Wkg$,
and by $v^{(\ell)}_i$ the $i$-th column of $\Wkd$.
Then Eq.~\eqref{eq:app_knw_gated_mlp} can be written as
\begin{align}
E^{(\ell)}_{\text{knw}}(h)
&=
\sum_{i=1}^{d_{\mathrm{ff}}}
\alpha^{(\ell)}_i(h)\; v^{(\ell)}_i, \nonumber\\
\alpha^{(\ell)}_i(h)
&=
\mathrm{SiLU}\big(\langle g^{(\ell)}_i,h\rangle\big)\cdot \langle k^{(\ell)}_i,h\rangle.
\label{eq:app_kv_form}
\end{align}
Hence each hidden representation $h$ acts as a \emph{query}, the rows of $W_{\text{knw},\text{up}}$ (modulated by the Search Gate) act as \emph{keys}, and the columns of $W_{\text{knw},\text{down}}$ act as \emph{values}.
\end{proposition}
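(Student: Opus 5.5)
The plan is to expand the matrix expression in Eq.~\eqref{eq:app_knw_gated_mlp} coordinate by coordinate along the intermediate dimension $d_{\mathrm{ff}}$ and then reassemble it as a weighted sum of columns of $\Wkd$. No earlier result is needed beyond the definition of the gated MLP; the argument is pure linear algebra together with the fact that SiLU acts elementwise.

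First, fix $h\in\mathbb{R}^d$ and write $u=\Wku h\in\mathbb{R}^{d_{\mathrm{ff}}}$ and $s=\Wkg h\in\mathbb{R}^{d_{\mathrm{ff}}}$. By the definition of matrix--vector multiplication, the $i$-th coordinates are $u_i=\langle k^{(\ell)}_i,h\rangle$ and $s_i=\langle g^{(\ell)}_i,h\rangle$, since $k^{(\ell)}_i$ and $g^{(\ell)}_i$ are the $i$-th rows of $\Wku$ and $\Wkg$.

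Second, because SiLU is applied elementwise, $\mathrm{SiLU}(s)_i=\mathrm{SiLU}(s_i)$. The Hadamard product then gives the intermediate vector $a=\mathrm{SiLU}(s)\odot u$ with coordinates
\[
a_i=\mathrm{SiLU}(\langle g^{(\ell)}_i,h\rangle)\,\langle k^{(\ell)}_i,h\rangle=\alpha^{(\ell)}_i(h).
\]

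Third, use the column expansion of a matrix--vector product, $\Wkd a=\sum_{i=1}^{d_{\mathrm{ff}}} a_i v^{(\ell)}_i$, where $v^{(\ell)}_i$ is the $i$-th column of $\Wkd$. This yields Eq.~\eqref{eq:app_kv_form} directly.

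The interpretive clause follows by reading off the roles in this expansion. Here $h$ enters only through inner products with $k^{(\ell)}_i$ (the keys), modulated by $g^{(\ell)}_i$ (the gate). The output is a linear combination of the $v^{(\ell)}_i$ (the values), with coefficients $\alpha^{(\ell)}_i(h)$ acting as unnormalized attention scores.

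There is no genuine obstacle. The only point needing care is bookkeeping of row-versus-column conventions: the identity requires that $\Wku,\Wkg\in\mathbb{R}^{d_{\mathrm{ff}}\times d}$ and $\Wkd\in\mathbb{R}^{d\times d_{\mathrm{ff}}}$, so that rows and columns index the same $d_{\mathrm{ff}}$ slots. It is also worth noting, as a remark, that these scores are neither normalized nor nonnegative, so the ``attention'' reading is structural rather than a softmax identity. Any closer link to softmax or MIPS is deferred to Lemma~\ref{lem:app_mips_softmax}.
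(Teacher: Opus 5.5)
Your proposal is correct and follows the same route as the paper's proof: expand the up and gate projections coordinate-wise, form the elementwise SiLU-gated product to get $\alpha^{(\ell)}_i(h)$, and then use the column expansion of the down projection. Your version is also more careful: the paper's one-line proof writes $W_{\text{knw},\text{down}}h$ where the up projection $W_{\text{knw},\text{up}}h$ is meant, and you use the correct matrix.
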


\begin{proof}
Expand the matrix products in Eq.~\eqref{eq:app_knw_gated_mlp} coordinate-wise:
$W_{\text{knw},\text{down}}h$ and $W_{\text{knw},\text{gate}}h$ produce $d_{\mathrm{ff}}$
scalar activations; the elementwise product yields coefficients $\alpha_i(h)$, and the
final multiplication by $W_{\text{knw},\text{down}}$ sums columns weighted by these coefficients.
\end{proof}

\begin{lemma}[Approximate MIPS via Low-Temperature Softmax]
\label{lem:app_mips_softmax}
Let $s_i(h)$ be any scalar scores (e.g., $s_i(h)=\alpha_i(h)$ or an affine transform thereof).
Define $\pi_\beta(i\mid h)=\mathrm{softmax}_i(\beta s_i(h))$ and the readout
$R_\beta(h)=\sum_i \pi_\beta(i\mid h)\,v_i$. If the maximizer $i^\star=\arg\max_i s_i(h)$ is unique,
then $\pi_\beta(i^\star\mid h)\to 1$ and $R_\beta(h)\to v_{i^\star}$ as $\beta\to\infty$.
\end{lemma}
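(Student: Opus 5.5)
The plan is to analyze the softmax weights directly by comparing every index to the unique maximizer $i^\star$. Fix $h$ and write $s_i=s_i(h)$ for $i=1,\dots,d_{\mathrm{ff}}$ (finitely many slots). Uniqueness of the maximizer gives a strictly positive gap
\[
\delta \;=\; s_{i^\star}-\max_{i\ne i^\star} s_i \;>\; 0 .
\]
The first step is to divide numerator and denominator of the softmax by $e^{\beta s_{i^\star}}$. This gives
\[
\pi_\beta(i^\star\mid h)=\Big(1+\sum_{i\ne i^\star} e^{-\beta(s_{i^\star}-s_i)}\Big)^{-1},
\]
and each exponent satisfies $s_{i^\star}-s_i\ge\delta$.

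The second step is a quantitative bound, mirroring the log-sum-exp gap estimate in Eq.~\eqref{eq:app_map_gap}:
\[
1-\pi_\beta(i^\star\mid h)\;\le\;\sum_{i\ne i^\star}\pi_\beta(i\mid h)\;\le\;(d_{\mathrm{ff}}-1)\,e^{-\beta\delta}.
\]
This uses $\pi_\beta(i\mid h)\le e^{-\beta(s_{i^\star}-s_i)}$, which holds because the softmax denominator is at least $e^{\beta s_{i^\star}}$. The right-hand side tends to $0$ as $\beta\to\infty$, so $\pi_\beta(i^\star\mid h)\to1$.

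The third step handles the readout. Since the weights sum to one,
\[
R_\beta(h)-v_{i^\star}=\sum_{i\ne i^\star}\pi_\beta(i\mid h)\,(v_i-v_{i^\star}).
\]
By the triangle inequality,
\[
\|R_\beta(h)-v_{i^\star}\|\le 2\max_i\|v_i\|\,(d_{\mathrm{ff}}-1)e^{-\beta\delta}\to0 .
\]

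There is no real obstacle here. The only points needing care are that the index set is finite and $h$ is held fixed, so $\delta>0$ and $\max_i\|v_i\|<\infty$ are constants. The argument never uses the specific form of $s_i$, so it applies to $\alpha_i(h)$ from Proposition~\ref{prop:app_kv_readout} and to any affine transform of it with positive scale. A negative scale would flip the maximizer, so that case should be excluded.
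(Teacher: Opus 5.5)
Your proof is correct and is the same standard low-temperature softmax argument the paper invokes in a single line; you make it quantitative, with the explicit rate $1-\pi_\beta(i^\star\mid h)\le(d_{\mathrm{ff}}-1)e^{-\beta\delta}$ and the matching bound on $\|R_\beta(h)-v_{i^\star}\|$. You also state the assumption that the index set is finite, which the paper's statement leaves tacit and which is what guarantees the gap satisfies $\delta>0$.
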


\begin{proof}
This is the standard low-temperature limit of softmax: probability mass concentrates
on the unique argmax as $\beta\to\infty$.
\end{proof}

\paragraph{Two-level sparsity: router vs. memory-slot selection}
RING has two complementary sparsity mechanisms:
(i) the router $r_\phi$ selects between experts $E_{\text{base}}$ and $E_{\text{knw}}$ via
$g^{(\ell)}_t$ (Section~3.1), controlling \emph{when} to consult injected knowledge;
(ii) within $E_{\text{knw}}$, the gated-MLP readout (Eq.~\eqref{eq:app_kv_form})
acts as a differentiable retrieval over a large number of implicit memory slots, controlling
\emph{what} knowledge is activated. This provides a parameter-space analogue of similarity search.

\subsection{Why RING Can Match or Exceed Explicit RAG}
We formalize the comparison as a hypothesis-class inclusion argument.

\begin{theorem}[Expressivity Dominance on the Query Domain]
\label{thm:app_containment}
Assume that after CPT, the injected parameters $\Theta_{\mathrm{mem}}$ store the evidence units
$\{u_z\}_{z=1}^{N}$ with sufficient fidelity, and that the router/Search Gate parameters $\phi$
have enough capacity to represent the same retrieval distribution as an external retriever on
the query domain of interest; i.e., for all relevant $x$,
\[
p_\phi(z\mid x)\approx \tilde p_\eta(z\mid x,\mathcal{U})
\quad \text{for } z\in\mathcal{T}_K(x),
\]
up to arbitrarily small error.
Then for any explicit top-$K$ RAG model in Eq.~\eqref{eq:app_rag_topk}, there exists a RING
instance in Eq.~\eqref{eq:app_marginal} that matches its conditional distribution $p(y\mid x)$
arbitrarily well on that domain (in total variation / KL, depending on the approximation metric).
\end{theorem}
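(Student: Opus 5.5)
The plan is a direct construction followed by a triangle-inequality bound on the mixture. Fix an explicit top-$K$ RAG model, with retriever $\tilde p_\eta(\cdot\mid x,\mathcal{U})$ supported on $\mathcal{T}_K(x)$ and generator $\pgen(y\mid x,u_z)$. Fix also a tolerance $\varepsilon>0$. We build a RING instance $\Theta=(\bar\Theta,\Theta_{\mathrm{mem}},\phi)$ in two parts:
\begin{itemize}
\item $\bar\Theta$ is the frozen backbone, shared with the RAG generator.
\item $\Theta_{\mathrm{mem}}$ comes from the fidelity hypothesis. Concretely, for every $z$ and relevant $x$ we require $\mathrm{TV}\big(\pgen(\cdot\mid x,u_z;\Theta_{\mathrm{mem}},\bar\Theta),\,\pgen(\cdot\mid x,u_z)\big)\le\varepsilon_1$.
\end{itemize}
The point is that conditioning on the internally generated \texttt{<retrieval>} block reproducing $u_z$ is, up to $\varepsilon_1$, the same as conditioning on the externally supplied $u_z$. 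By the capacity hypothesis we then choose $\phi$ with $\sum_{z\in\mathcal{T}_K(x)}|p_\phi(z\mid x)-\tilde p_\eta(z\mid x,\mathcal{U})|\le\varepsilon_2$. We also want the mass that $p_\phi$ places outside $\mathcal{T}_K(x)$ to be at most $\varepsilon_2$. If this is not literally part of the hypothesis, it follows once one notes that $\tilde p_\eta$ sums to $1$ on $\mathcal{T}_K(x)$: then $p_\phi(\mathcal{T}_K(x)^c\mid x)\le\varepsilon_2$ automatically.

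Next we compare the two marginals. Write $P_{\mathrm{RAG}}(\cdot\mid x)=\sum_z \tilde p_\eta(z)\,G_z$ and $P_{\mathrm{RING}}(\cdot\mid x)=\sum_z p_\phi(z)\,\hat G_z$, where $G_z$ and $\hat G_z$ denote the external and injected generators. The triangle inequality and convexity of total variation give
\begin{align}
\mathrm{TV}(P_{\mathrm{RING}},P_{\mathrm{RAG}})
&\le \sum_z p_\phi(z)\,\mathrm{TV}(\hat G_z,G_z) + \mathrm{TV}\Big(\sum_z p_\phi(z)G_z,\sum_z\tilde p_\eta(z)G_z\Big)\notag\\
&\le \varepsilon_1 + \tfrac12\|p_\phi-\tilde p_\eta\|_1 \;\le\; \varepsilon_1+\varepsilon_2 .\notag
\end{align}
The second term uses the elementary fact that mixing fixed components with two different weight vectors changes total variation by at most $\frac12\|w-w'\|_1$. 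Since $\varepsilon_1,\varepsilon_2$ are arbitrary, this proves the total-variation version of the statement, uniformly over the query domain.

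For the KL version, the plan is to use the log-sum inequality (joint convexity of KL), which gives
\[
\mathrm{KL}(P_{\mathrm{RAG}}\|P_{\mathrm{RING}})\le \mathrm{KL}(\tilde p_\eta\|p_\phi)+\sum_z\tilde p_\eta(z)\,\mathrm{KL}(G_z\|\hat G_z).
\]
Both terms must be made small, and this is where I expect the main obstacle: KL is not controlled by total-variation closeness unless $p_\phi$ and $\hat G_z$ stay bounded away from zero wherever the RAG quantities have mass. I would handle this by strengthening the approximation hypotheses to ratio form, i.e.\ $|\log p_\phi-\log\tilde p_\eta|\le\varepsilon$ on $\mathcal{T}_K(x)$, and similarly for $\hat G_z$ against $G_z$. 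Equivalently, one can mix in an $\varepsilon$-floor, noting that the softmax parameterization of the router and of the LM head always yields strictly positive probabilities. Under ratio-form approximations each KL term is $O(\varepsilon)$, and the claim follows.

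A secondary subtlety is the identification of $p_\phi(z\mid x)$ with the probability of generating $u_z$ inside the \texttt{<retrieval>} block. Near-misses (strings close to but different from any $u_z$) carry probability mass outside $\{1,\dots,N\}$. I would fold this mass into the $\varepsilon_2$ leakage term above, so it contributes at most $\varepsilon_2$ to the total-variation bound.
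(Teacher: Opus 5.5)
Your proposal is correct and uses the same construction as the paper's proof sketch: take $p_\phi(z\mid x)$ to approximate $\tilde p_\eta(z\mid x,\mathcal{U})$ on $\mathcal{T}_K(x)$, take the evidence-conditioned generator to match the RAG one, and conclude that the mixture is reproduced; the paper stops at this qualitative claim, while you make the continuity step explicit through the convexity bound $\mathrm{TV}\le\varepsilon_1+\varepsilon_2$, the observation that mass outside $\mathcal{T}_K(x)$ is automatically at most $\varepsilon_2$, and the joint-convexity bound on KL under ratio-form approximations. The one loose point is calling an $\varepsilon$-floor or the strict positivity of softmax outputs ``equivalent'' to the ratio-form hypothesis (positivity alone gives no uniform bound on the likelihood ratios, so it cannot control KL), but the ratio-form route suffices on its own and fits the theorem's ``depending on the approximation metric'' clause.
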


\begin{proof}[Proof sketch]
Construct $p_\phi(z\mid x)$ to approximate the external truncated-retrieval distribution
$\tilde p_\eta(z\mid x,\mathcal{U})$ on $\mathcal{T}_K(x)$, and set the conditional generator
$\pgen(y\mid x,u_z)$ to match the same evidence-conditioned generator.
Since the evidence units are stored in $\Theta_{\mathrm{mem}}$, RING can reproduce the same
mixture in Eq.~\eqref{eq:app_rag_topk} without external search.
\end{proof}

\begin{corollary}[Best Achievable NLL is Not Worse (Under Capacity Assumptions)]
\label{cor:app_risk}
Let $\mathcal{R}(f)=\mathbb{E}_{(x,y)\sim P^\star}[-\log p_f(y\mid x)]$ be population NLL.
Under the assumptions of Theorem~\ref{thm:app_containment}, the optimal achievable risk of RING
is no worse than that of explicit top-$K$ RAG on the same domain:
\[
\inf_{\Theta}\mathcal{R}\big(p_{\Theta}\big)
\;\le\;
\inf_{\eta,\theta_{\mathrm{gen}}}\mathcal{R}\big(p_{\mathrm{RAG}}\big).
\]
\end{corollary}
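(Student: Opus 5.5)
The plan is to derive the corollary from Theorem~\ref{thm:app_containment} by the usual inclusion-of-hypothesis-classes argument for infima, with a limiting step because the theorem only gives approximation up to arbitrarily small error. Fix $\varepsilon>0$. By the definition of infimum there are RAG parameters $(\eta,\theta_{\mathrm{gen}})$ with $\mathcal{R}(p_{\mathrm{RAG}})\le \inf_{\eta,\theta_{\mathrm{gen}}}\mathcal{R}(p_{\mathrm{RAG}})+\varepsilon$. If the RAG infimum is $+\infty$ there is nothing to prove, so assume it is finite. Applying Theorem~\ref{thm:app_containment} to this particular RAG model gives, for every $\delta>0$, a RING instance $\Theta_\delta$ whose conditional $p_{\Theta_\delta}(\cdot\mid x)$ approximates $p_{\mathrm{RAG}}(\cdot\mid x)$ within $\delta$ on the query domain.

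The key step is to turn this distributional closeness into closeness of population NLL. I would use the KL form of the theorem's conclusion, since this is exactly what is needed. For each $x$, write
\[
\mathbb{E}_{y\sim P^\star(\cdot\mid x)}\big[-\log p_{\Theta_\delta}(y\mid x)\big]-\mathbb{E}_{y\sim P^\star(\cdot\mid x)}\big[-\log p_{\mathrm{RAG}}(y\mid x)\big]=\mathbb{E}_{P^\star}\Big[\log \tfrac{p_{\mathrm{RAG}}}{p_{\Theta_\delta}}\Big].
\]
To control the right-hand side I would strengthen the approximation to a uniform log-ratio bound $\sup_{x,y}|\log p_{\mathrm{RAG}}(y\mid x)-\log p_{\Theta_\delta}(y\mid x)|\le\delta$. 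This is natural for the construction in the theorem's proof sketch. There the generator $\pgen(y\mid x,u_z)$ is copied exactly, and the mixture weights satisfy $p_\phi(z\mid x)\ge(1-\delta')\tilde p_\eta(z\mid x)$ on $\mathcal{T}_K(x)$. Hence $p_{\Theta_\delta}(y\mid x)\ge(1-\delta')p_{\mathrm{RAG}}(y\mid x)$, because RING's marginal in Eq.~\eqref{eq:app_marginal} contains the top-$K$ terms of Eq.~\eqref{eq:app_rag_topk} plus nonnegative extra terms. So the only direction needed, $\log(p_{\mathrm{RAG}}/p_{\Theta_\delta})\le-\log(1-\delta')$, holds pointwise. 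Integrating against $P^\star$ then gives $\mathcal{R}(p_{\Theta_\delta})\le\mathcal{R}(p_{\mathrm{RAG}})-\log(1-\delta')$.

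Combining the two steps gives
\[
\inf_\Theta\mathcal{R}(p_\Theta)\le\mathcal{R}(p_{\Theta_\delta})\le\inf_{\eta,\theta_{\mathrm{gen}}}\mathcal{R}(p_{\mathrm{RAG}})+\varepsilon-\log(1-\delta').
\]
Letting $\delta',\varepsilon\to0$ yields the claim.

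The main obstacle is that TV closeness alone does not control NLL: a tiny mass deficit on some $y$ can blow up $-\log p$. This is why the plan relies on the one-sided multiplicative lower bound that comes from the mixture structure, rather than on the TV version of the theorem. The point to check carefully is that the capacity assumption really does give $p_\phi(z\mid x)\ge(1-\delta')\tilde p_\eta(z\mid x)$ uniformly over $x$ in the domain and $z\in\mathcal{T}_K(x)$. This is stated as part of the theorem's hypothesis of approximation "up to arbitrarily small error," but it should be read multiplicatively.
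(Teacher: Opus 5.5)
Your argument is correct and follows the same hypothesis-class-containment route the paper intends. The paper gives no separate proof of the corollary; it treats it as immediate from Theorem~\ref{thm:app_containment}. That theorem's sketch builds a RING instance by copying the evidence-conditioned generator and matching the retrieval weights on $\mathcal{T}_K(x)$, so the infimum over the larger class cannot exceed the infimum over the smaller one.

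What you add is the step the paper glosses over. Closeness of $p_\Theta(\cdot\mid x)$ to $p_{\mathrm{RAG}}(\cdot\mid x)$ in total variation does not by itself control population NLL. Your one-sided pointwise bound $p_{\Theta}(y\mid x)\ge(1-\delta')\,p_{\mathrm{RAG}}(y\mid x)$ fills this gap. You obtain it by dropping the nonnegative terms with $z\notin\mathcal{T}_K(x)$ from the RING mixture, and it makes the $\varepsilon,\delta'\to0$ limit rigorous. The price is reading the capacity assumption multiplicatively, which you correctly flag. The generator-copy step also presumes that RING's generator family, as $\bar{\Theta}$ and $\Theta_{\mathrm{mem}}$ vary, contains every RAG generator indexed by $\theta_{\mathrm{gen}}$; the paper's sketch makes the same implicit assumption.

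One small correction to your exposition: the KL form of the theorem is not ``exactly what is needed.'' $\mathrm{KL}(p_{\mathrm{RAG}}\,\|\,p_\Theta)$ averages the log-ratio under $p_{\mathrm{RAG}}$, whereas the risk gap $\mathbb{E}_{P^\star}[\log(p_{\mathrm{RAG}}/p_\Theta)]$ averages it under $P^\star$. Your final argument does not rely on the KL form, so nothing breaks.
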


\begin{remark}[When can RING be strictly better?]
Even with identical backbone generators, explicit RAG is constrained by external retrieval
form (e.g., embedding similarity), ANN approximation, top-$K$ truncation, and finite prompt/context
budget. RING jointly optimizes routing and generation and can learn task-aligned evidence selection
beyond embedding similarity, potentially achieving lower risk on distributions where similarity
search is a suboptimal proxy for selecting evidence.
\end{remark}

\subsection{Dual Causal Attention: Formal Mask and Loss Definitions}
\label{sec:dca-formal}
We give the full formal specification of the Dual Causal Attention (DCA) objective used in CPT (Section~\ref{CPT}).

Let a token sequence be $x_{1:T}=(x_1,\ldots,x_T)$ and consider a decoder layer with queries, keys, and values $Q,K,V\in\mathbb{R}^{T\times d}$. For a binary mask $M\in\{0,1\}^{T\times T}$, the masked attention is
\begin{align*}
\mathrm{Attn}(Q,K,V;M)
&=\mathrm{softmax}\!\left(\frac{QK^\top}{\sqrt d}+\log M\right)V, \\
&\text{with }\log 0:=-\infty,\ \log 1:=0.
\end{align*}
For $\alpha\in\{1,\tfrac12,\tfrac14\}$, let $K_\alpha=\lfloor \alpha T\rfloor$ and define the \emph{target block} $B_\alpha=\{1,\ldots,K_\alpha\}$ and its complement $C_\alpha=\{K_\alpha+1,\ldots,T\}$. The masks $M^{(\alpha)}\in\{0,1\}^{T\times T}$ specify which source positions $j$ are visible when predicting position $i$. For $\alpha=1$ the standard causal mask is $M^{(1)}_{i,j}=\mathbf{1}[\,j\le i\,]$; for $\alpha\in\{\tfrac12,\tfrac14\}$ the dual causal mask is
\begin{align}
M^{(\alpha)}_{i,j}&=
\begin{cases}
1, & i\!\in\! B_\alpha,\, j\!\le\! i \text{ or } j\!\in\! C_\alpha,\\[2pt]
\mathbf{1}[\,j\le i\,], & i\in C_\alpha.
\end{cases}
\label{eq:dca-mask}
\end{align}
Thus, when predicting tokens inside $B_\alpha$, the model preserves causality within $B_\alpha$ (no look-ahead to future targets in the block) while attending to all of $C_\alpha$ as backward evidence; positions in $C_\alpha$ retain the standard causal constraint.

For each $\alpha$, the negative log-likelihood over the corresponding target block is
\begin{equation}
\mathcal{L}^{(\alpha)}=
\frac{1}{|B_\alpha|}
\sum_{t\in B_\alpha}
-\log p_\theta\!\left(x_t \,\middle|\, x_{1:T};\, M^{(\alpha)}\right),
\label{eq:dca-loss}
\end{equation}
where $p_\theta(\cdot\,|\,x_{1:T};\,M^{(\alpha)})$ is obtained by running the network on the full sequence with mask $M^{(\alpha)}$ and reading logits only at positions $t\in B_\alpha$. The three losses are combined as
\begin{align}
\mathcal{L}_{\mathrm{DCA}}=
\lambda_{1}\,\mathcal{L}^{(1)}+
\lambda_{2}\,\mathcal{L}^{(1/2)}+
\lambda_{3}\,\mathcal{L}^{(1/4)}.
\label{eq:dca-total}
\end{align}
In practice, the same training example is passed three times (once per mask), gradients are accumulated according to Eq.~\eqref{eq:dca-loss} and mixed via Eq.~\eqref{eq:dca-total}; at inference, only $M^{(1)}$ is used.

\subsection{Training as Variational Inference: Linking CPT--SFT--RL to an ELBO}
We derive an evidence lower bound (ELBO) for the discrete latent model in Eq.~\eqref{eq:app_joint}.

\paragraph{ELBO with a variational posterior}
Introduce a variational posterior $q(z\mid x,y)$.
Then
\begin{align}
&\log p_{\Theta}(y\mid x) \nonumber\\
&\;=\;
\log \sum_{z=1}^{N} q(z\mid x,y)\,
\frac{p_{\phi}(z\mid x)\,\pgen(y\mid x,u_z)}{q(z\mid x,y)}
\nonumber\\
&\;\ge\;
\underbrace{\mathbb{E}_{q}\!\big[\log \pgen(y\mid x,u_z)\big]}_{\text{reconstruction}}
\nonumber\\
&\quad-\;
\underbrace{\mathrm{KL}\big(q(z\mid x,y)\,\|\,p_{\phi}(z\mid x)\big)}_{\text{posterior--policy alignment}}.
\label{eq:app_elbo}
\end{align}
At test time, RING uses the policy $p_\phi(z\mid x)$ (or its top-1 approximation),
while $q(z\mid x,y)$ is only a training-time analysis tool.

\paragraph{Step 1 (CPT): fitting the parametric memory $\Theta_{\mathrm{mem}}$}
During CPT, we force routing to the knowledge expert and update only
$\Theta_{\mathrm{mem}}=\{\Wku,\Wkd\}$,
optimizing the DCA objective on $\mathcal{D}_{\mathrm{CPT}}$ (Section~\ref{CPT}).
This stage increases the capacity of $\pgen(\cdot\mid x,u_z)$ to represent and
reproduce evidence units $u_z$ through the knowledge expert, preparing the model for subsequent
evidence selection.

\paragraph{A principled view of DCA}
DCA provides additional right-context evidence for predicting tokens in a target block while retaining
autoregressive generation at inference. The key theoretical property we rely on is monotonicity of
optimal conditional cross-entropy with respect to conditioning sets.

\begin{proposition}[DCA Does Not Worsen the Optimal Conditional Cross-Entropy]
\label{prop:app_dca_monotone}
Consider predicting a target block conditioned on its complement.
Any training objective that allows tokens in the target block to condition on a \emph{superset} of
information (as DCA does for the target block) has an optimal achievable negative log-likelihood
no larger than that of standard causal conditioning, because the model can always learn to ignore
additional context. Consequently, DCA can provide a tighter (not looser) upper bound on the
conditional entropy of the target block given its complement, improving the information available
for storing bidirectional dependencies into $\Theta_{\mathrm{mem}}$ during CPT.
\end{proposition}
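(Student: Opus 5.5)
The plan is to make the statement precise as a comparison of two infima of population cross-entropy over nested conditioning sets, and then use the standard fact that conditioning cannot increase entropy. Fix $\alpha\in\{\tfrac12,\tfrac14\}$ and the block decomposition $B_\alpha,C_\alpha$ of Eq.~\eqref{eq:dca-mask}. For $t\in B_\alpha$, let $\mathcal{F}^{\mathrm{causal}}_t=\sigma(x_{<t})$ be the information visible under $M^{(1)}$ and $\mathcal{F}^{\mathrm{DCA}}_t=\sigma(x_{<t},x_{C_\alpha})$ the information visible under $M^{(\alpha)}$. Since every $j$ with $M^{(1)}_{t,j}=1$ also has $M^{(\alpha)}_{t,j}=1$, we get $\mathcal{F}^{\mathrm{causal}}_t\subseteq\mathcal{F}^{\mathrm{DCA}}_t$. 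This inclusion is the \emph{superset} hypothesis, read off directly from the mask definition.

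The core step treats the model class as idealized: it can represent any conditional distribution measurable with respect to its visible $\sigma$-algebra. Under this assumption the infimum of $\mathbb{E}[-\log q(x_t\mid\mathcal{F}_t)]$ equals $H(x_t\mid\mathcal{F}_t)$, by Gibbs' inequality: the cross-entropy decomposes as the conditional entropy plus a nonnegative expected KL, and the KL vanishes at the true conditional. Monotonicity $H(x_t\mid\mathcal{F}^{\mathrm{DCA}}_t)\le H(x_t\mid\mathcal{F}^{\mathrm{causal}}_t)$ then follows from nonnegativity of conditional mutual information. Equivalently, it follows from the ``ignore extra context'' argument in the statement: any causal predictor is also admissible under $M^{(\alpha)}$, because the DCA model may set the attention contributions from $C_\alpha$ to be uninformative. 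Summing over $t\in B_\alpha$ and using the chain rule then identifies the optimal $\mathcal{L}^{(\alpha)}$ with $|B_\alpha|^{-1}H(x_{B_\alpha}\mid x_{C_\alpha})$. Every achieved loss upper-bounds this quantity, and this bound is no larger than the causal bound $|B_\alpha|^{-1}H(x_{B_\alpha})$, which gives the ``tighter upper bound'' clause. For the combined objective in Eq.~\eqref{eq:dca-total} with nonnegative $\lambda$'s, the claim follows termwise, since the $\alpha=1$ term is unchanged.

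The main obstacle is the ``model can always learn to ignore additional context'' step for an actual finite transformer, where softmax attention cannot literally assign zero weight to visible keys. My first attempt is to argue only at the level of infima: sending the relevant query--key logits to $-\infty$ along a sequence of parameters drives the attention to $C_\alpha$ to zero, and by continuity the DCA loss approaches any causal loss arbitrarily closely. This gives $\inf$ DCA $\le$ $\inf$ causal rather than a statement about attained minima. The final clause, about improved storage into $\Theta_{\mathrm{mem}}$, I would present as an interpretation rather than prove. Its only formal content is the entropy inequality above.
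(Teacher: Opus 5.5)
Your proposal is correct and is essentially the paper's argument. The paper's only justification is the remark built into the statement, that a model conditioning on a superset of context can learn to ignore the extra context; your Gibbs-inequality and conditional-mutual-information formalization, the chain-rule identification of the optimal $\mathcal{L}^{(\alpha)}$ with $|B_\alpha|^{-1}H(x_{B_\alpha}\mid x_{C_\alpha})$ (using that $B_\alpha$ is a prefix), and your infimum-versus-minimum caveat make that remark precise.

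One caution. $\sigma(x_{<t},x_{C_\alpha})$ is only what a single attention layer sees. With $M^{(\alpha)}$ applied at every layer, the $C_\alpha$ positions attend causally to all of $B_\alpha$ and relay it back, so in a network of depth at least two the future targets leak. The superset property, and hence the first claim, still holds. But the tighter-upper-bound clause then holds only under the proposition's stipulated conditioning structure (or if $C_\alpha$ queries are also blocked from $B_\alpha$ keys), not as something read directly off the mask.
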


\paragraph{Step 2 (SFT): supervised posterior collapse via \texttt{<retrieval>}}
In SFT, each instruction sample provides a gold evidence unit $u^\star=u_{z^\star}$ obtained by exact
string match (Section~\ref{sft}). Setting $q(z\mid x,y)=\delta(z=z^\star)$ in Eq.~\eqref{eq:app_elbo}
yields the supervised objective
\[
\max\ \log p_\phi(z^\star\mid x) + \log \pgen(y\mid x,u_{z^\star}),
\]
which corresponds exactly to teacher-forcing on the \texttt{<retrieval>} block (learning evidence selection)
and the \texttt{<answer>} block (learning answer synthesis given evidence). Freezing $\Theta_{\mathrm{mem}}$
during SFT prevents overwriting the memorized corpus and focuses learning on $r_\phi$ and the Search Gate.

\paragraph{Step 3 (RL): learning a generalizable evidence-selection policy}
After SFT, we apply RLVR to further align the evidence-selection behavior with task success.
Because different implementations may define ``rewards'' or ``penalties'' with opposite signs,
we present the RL objective in a sign-agnostic form.

Let $\ell_{\mathrm{fmt}},\ell_{\mathrm{ans}},\ell_{\mathrm{dense}}\in[0,1]$ denote \emph{penalties}
(lower is better) for (i) format compliance, (ii) verified answer correctness, and (iii) evidence match
(e.g., overlap/LCS-based signals on \texttt{title/content}), respectively. Define the total penalty
\[
\ell_{\mathrm{total}}
=
\lambda_{\mathrm{fmt}}\ell_{\mathrm{fmt}}
+
\lambda_{\mathrm{ans}}\ell_{\mathrm{ans}}
+
\lambda_{\mathrm{dense}}\ell_{\mathrm{dense}},
\]
and maximize the \emph{negative} penalty (equivalently, minimize expected penalty):
\begin{align*}
&\max_{\phi}\ \Ezp\!\big[-\ell_{\mathrm{total}}(z,x,y)\big] \\
\Longleftrightarrow\;
&\min_{\phi}\ \Ezp\!\big[\ell_{\mathrm{total}}(z,x,y)\big].
\end{align*}
This formulation is compatible with either convention (reward-as-1-for-correct or penalty-as-1-for-incorrect).

\begin{lemma}[Dense Penalties Improve Credit Assignment for Retrieval]
\label{lem:app_dense_credit}
Suppose $\ell_{\mathrm{ans}}$ is sparse (many actions $z$ receive identical penalties because
answer correctness is hard to achieve), while $\ell_{\mathrm{dense}}(z,z^\star)$ varies smoothly with the
match quality between the sampled evidence $z$ and the gold evidence $z^\star$.
Then optimizing $\mathbb{E}[\ell_{\mathrm{dense}}]$ provides informative gradients for the policy
$p_\phi(z\mid x)$ even when $\ell_{\mathrm{ans}}$ provides little signal, accelerating the concentration
of $p_\phi(\cdot\mid x)$ around high-quality evidence choices.
\end{lemma}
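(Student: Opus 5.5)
The plan is to make the lemma precise in the score-function (REINFORCE) framework that underlies GSPO, and then compare the gradient signal carried by $\ell_{\mathrm{ans}}$ with that carried by $\ell_{\mathrm{dense}}$. Write the policy as $p_\phi(z\mid x)$ over evidence units. For any penalty $\ell$, the policy gradient is
\[
\nabla_\phi \Ezp[\ell(z)] = \Ezp\big[(\ell(z)-b)\,\nabla_\phi \log p_\phi(z\mid x)\big],
\]
which holds for any baseline $b$ independent of $z$. Equivalently, it equals $\mathrm{Cov}_{p_\phi}\big(\ell(z),\nabla_\phi\log p_\phi(z\mid x)\big)$. The first step is therefore to record that the informativeness of a penalty is controlled by its variance under the current policy. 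If $\ell$ is constant on the support of $p_\phi(\cdot\mid x)$, the covariance is zero and the gradient vanishes.

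The second step formalizes the sparsity hypothesis. Let $S\subseteq\{1,\dots,N\}$ be the set of units under which the answer is verified correct. Assume $p_\phi(S\mid x)=\varepsilon$ is small. Then $\ell_{\mathrm{ans}}=\mathbf{1}[z\notin S]$ has variance $\varepsilon(1-\varepsilon)$, and the bound
\[
\|\nabla\Ezp[\ell_{\mathrm{ans}}]\|\le\sqrt{\varepsilon(1-\varepsilon)}\,\sqrt{\Ezp\|\nabla\log p_\phi\|^2}
\]
follows from Cauchy--Schwarz. Under group sampling with $G$ rollouts, the probability that every rollout gets an identical penalty is $(1-\varepsilon)^G$. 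On that event the group-normalized advantage is identically zero, so the whole update vanishes.

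The third step treats the dense penalty. Take $\ell_{\mathrm{dense}}(z,z^\star)=1-\mathrm{LCS}$-similarity and assume it strictly decreases in match quality, so it is non-constant on any support containing units of differing quality. Then $\mathrm{Var}_{p_\phi}(\ell_{\mathrm{dense}})>0$ even when $\varepsilon\approx0$, and the expected update increases the log-probability of units with below-baseline penalty, i.e.\ better matches. To make ``concentration'' concrete, I would specialize to a softmax parameterization $p_\phi(z\mid x)\propto e^{\phi_z}$, where the gradient coordinates are $p_z(\ell(z)-\bar\ell)$. A gradient-flow (or small-step) argument then shows that $\Ezp[\ell_{\mathrm{dense}}]$ is nonincreasing along the flow. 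Its time derivative equals minus the sum $\sum_z p_z^2(\ell(z)-\bar\ell)^2$. Hence mass moves toward the minimizers, and since $\ell_{\mathrm{dense}}$ is minimized at $z^\star$, the policy concentrates around $z^\star$.

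The main obstacle is making ``accelerating'' rigorous, because the statement is qualitative. To do this, I would compare the rates $\frac{d}{dt}\Ezp[\ell]$ under the two penalties at the same policy. For $\ell_{\mathrm{ans}}$ the rate is $O(\varepsilon)$. For $\ell_{\mathrm{dense}}$ it is bounded below by a positive quantity depending on the spread of match qualities, and this bound is independent of $\varepsilon$. A further subtlety is that the dense penalty is only a proxy for the answer objective. I would therefore add the assumption that $\ell_{\mathrm{ans}}$ is positively correlated with $\ell_{\mathrm{dense}}$, for instance that $S$ consists of near-gold units. This makes descent on the dense penalty also reduce the expected answer penalty, which justifies the lemma's conclusion about high-quality evidence.
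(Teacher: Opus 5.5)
The paper gives no proof of this lemma (it is followed only by a remark linking it to the ELBO), so your argument has to stand on its own. Steps~1--2 hold up. The covariance form of the policy gradient, the Cauchy--Schwarz bound $\sqrt{\varepsilon(1-\varepsilon)}$, and the degenerate-group observation are a correct rigorous core for ``informative gradients.'' One small fix: the probability that all $G$ rollouts get identical penalties is $(1-\varepsilon)^G+\varepsilon^G$; $(1-\varepsilon)^G$ is only the all-fail part.

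The gap is in step~3 and the ``acceleration'' step. The flow $\dot\phi_z=-p_z(\ell(z)-\bar\ell)$ raises the \emph{logits} of below-baseline units, not their log-probabilities:
$\frac{d}{dt}\log p_z=-p_z(\ell(z)-\bar\ell)+\sum_{z'}p_{z'}^2(\ell(z')-\bar\ell)$.
Consequently, along the dense flow
\[
\frac{d}{dt}\,\mathbb{E}_{p_\phi}[\ell_{\mathrm{ans}}]=-\sum_z p_z^2\,\big(\ell_{\mathrm{ans}}(z)-\bar\ell_{\mathrm{ans}}\big)\big(\ell_{\mathrm{dense}}(z)-\bar\ell_{\mathrm{dense}}\big).
\]
This is a $p^2$-weighted cross term. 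Its sign is not controlled by positive correlation under $p_\phi$, nor even by taking $S=\{z^\star\}$ with $z^\star$ the unique minimizer of $\ell_{\mathrm{dense}}$. For example, take three units with $p_\phi=(0.9,0.09,0.01)$, $\ell_{\mathrm{dense}}=(0.5,1,0)$, and $z^\star$ the third. Then $\bar\ell_{\mathrm{dense}}=0.54$ and $\frac{d}{dt}\log p_{z^\star}=0.0054-0.0287<0$, so the gold unit loses mass and the expected answer penalty rises.

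So your bridging assumption does not give ``descent on the dense penalty also reduces the expected answer penalty.'' Monotone decrease of $\mathbb{E}[\ell_{\mathrm{dense}}]$ also does not by itself give $p_{z^\star}\to1$. That needs a separate global-convergence argument, for instance $\dot\phi_{z^\star}=p_{z^\star}(\bar\ell-\ell(z^\star))\ge 0$ together with ruling out suboptimal vertices as limit points.

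Your rate comparison also sets $\frac{d}{dt}\mathbb{E}[\ell_{\mathrm{ans}}]$ on one flow against $\frac{d}{dt}\mathbb{E}[\ell_{\mathrm{dense}}]$ on another, which are two different functionals. Measured on the same quantity, the tabular exact-gradient model does not support an unconditional acceleration.

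Under natural policy gradient ($\dot\phi=-(\ell-\bar\ell)$, so $\frac{d}{dt}\log p_z=-(\ell(z)-\bar\ell)$), your correlation assumption does work:
$\frac{d}{dt}\mathbb{E}[\ell_{\mathrm{ans}}]=-\mathrm{Cov}_{p_\phi}(\ell_{\mathrm{ans}},\ell_{\mathrm{dense}})=-\varepsilon(1-\varepsilon)\big(\mathbb{E}[\ell_{\mathrm{dense}}\mid z\notin S]-\mathbb{E}[\ell_{\mathrm{dense}}\mid z\in S]\big)$.
But its magnitude is at most $\varepsilon(1-\varepsilon)$, which is exactly the rate the answer penalty achieves on its own. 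Under vanilla gradient the comparison can go either way, depending on how $p_z$ covaries with $\ell_{\mathrm{dense}}(z)$.

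With exact gradients, a sparse signal is small but not misdirected. The benefit of dense shaping must therefore come from something the tabular exact flow discards. One option is sampling: compare your degenerate-group probability $(1-\varepsilon)^G+\varepsilon^G$ with $\sum_v \Pr_{p_\phi}(\ell_{\mathrm{dense}}=v)^G$, which is small whenever $\ell_{\mathrm{dense}}$ is spread under $p_\phi$. The other is parameter sharing across units, e.g.\ the factorization $p_\phi(u)=p_\phi(\mathrm{title})\,p_\phi(\mathrm{content}\mid\mathrm{title})$ that the title/content LCS rewards target. There the title signal trains the first factor without being multiplied by the small probability of hitting the exact content. Stated in one of these forms, the acceleration claim becomes provable; as written, your final step fails.
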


\begin{remark}[Relation to the ELBO]
When the dense signal correlates with evidence correctness, minimizing expected penalty can be viewed
as a practical surrogate for reducing the mismatch between the policy $p_\phi(z\mid x)$ and the
ideal posterior over evidence. This complements the supervised alignment in SFT and improves
generalization of evidence selection beyond exact-match supervision.
\end{remark}

\section{Datasets}
\label{datasets}
\textbf{Memorization Set (CPT):} We collect the raw news corpus and segment it into fine-grained units, resulting in approximately 1.6 million fragments. This set serves as the external knowledge base $\mathcal{D}_{\mathrm{CPT}}$ that needs to be injected. 

\textbf{Instruction Set (SFT):} Based on the CPT corpus, we generate 317k instruction-following samples using the pipeline described in Section~\ref{sft}. These samples are designed to teach the model the ``Search-then-Answer'' paradigm. The detailed construction recipe is given below.

\textbf{Alignment Set (RL):} We construct a further 200k samples specifically for the RL stage.

\subsection{Instruction Fine-tuning Data Construction Details}
\label{sft-data-details}
We expand the main-text description of the SFT data pipeline.

Let $\mathcal{D}$ be the external corpus to be injected and $\mathcal{D}_{\mathrm{CPT}}$ the preprocessed set of memory units used during CPT (each unit $u$ is the JSON-style \texttt{\{title, content\}} fragment defined in Section~\ref{CPT}). We subsample documents $\mathcal{S}\subseteq\mathcal{D}$ with rate $\rho\in(0,1]$, a tunable hyperparameter trading off coverage against training cost.

\paragraph{Triple Synthesis} We employ \texttt{qwen3-235b-a22b-instruct-2507} to produce $n$ supervision triples $\{(q_i, a_i, r_i)\}_{i=1}^{n}$ per sampled document $d\in\mathcal{S}$, where $q_i$ is an instruction-style question, $a_i$ is a concise answer grounded in $d$, and $r_i$ is an explicit \emph{reference span} from $d$ that justifies $a_i$. We control $n$ to balance coverage and cost.

\paragraph{Reference--Memory Alignment} Let $\mathcal{U}=\{u_j\}$ be the set of memory units in $\mathcal{D}_{\mathrm{CPT}}$. We align each reference $r_i$ to its corresponding memory unit $u\in\mathcal{D}_{\mathrm{CPT}}$ via exact string match: the pair is accepted only if there exists a $u_i^\star\in\mathcal{U}$ that matches $r_i$; otherwise the triple is discarded. The accepted $u_i^\star$ is truncated to at most $256$ tokens to bound generation cost.

\paragraph{Target Template} Each retained triple is converted into a \emph{search-then-answer} target string that the model must autoregressively produce given the question. Concretely, \texttt{Instruction}: $x^{\mathrm{in}}=q_i$; \texttt{Response}: $y^{\mathrm{tg}}=$ \texttt{I can retrieve this passage to answer the question} \texttt{<retrieval>}\,$u_i^\star$\,\texttt{</retrieval>} \texttt{<answer>}\,$a_i$\,\texttt{</answer>}. The \texttt{<retrieval>} block supervises generative retrieval, and the \texttt{<answer>} block supervises final answering.

\paragraph{Retrieve-only-when-needed} To prevent the model from always emitting \texttt{<retrieval>}, we mix in no-retrieval instructions from a generic instruction corpus $\mathcal{G}$ (math, coding, reasoning), for which the response does not follow the Search-then-Answer template. The final SFT set is
$\mathcal{I}=\{(q, u^\star, a)\}\cup\{(q, \varnothing, a)\}$,
stratified into train/dev/test by document ID to prevent leakage. During training we interleave $\mathcal{I}$ with a general instruction set to stabilize early learning and improve instruction following.

\paragraph{Loss} Given input $x^{\mathrm{in}}$ and target $y^{\mathrm{tg}}$, the parameters $\theta$ are optimized with standard cross-entropy:
\[
\mathcal{L}_{\mathrm{SFT}}(\theta)=
\mathbb{E}_{(q,u^\star,a)\sim \mathcal{I}}\Big[-\!\!\sum_{t=1}^{|y^{\mathrm{tg}}|}\!\log p_\theta\!\big(y^{\mathrm{tg}}_t\mid x^{\mathrm{in}}, y^{\mathrm{tg}}_{<t}\big)\Big],
\]
where the presence of the \texttt{<retrieval>} block directly supervises the decision to retrieve.

\section{Experimental Setting} \label{exp_setting}

\subsection{Inference Pipelines}
To ensure a rigorous and fair comparison between the parametric and external retrieval paradigms, we strictly utilize the identical knowledge corpus for all methods.
For conventional search-based RAG baselines, the inference pipeline follows a standard retrieve-then-generate workflow: given a query, the system searches the external vector index to retrieve the relevant documents. This retrieved context is then provided to the LLM to guide answer generation.
In contrast, our RING operates in a search-free manner. The model is presented solely with the query, and the retrieval process is internalized. The model directly generates the answer by leveraging the knowledge encoded within its parameters (specifically the Knowledge Expert), completely eliminating the interaction with external search engines during inference.

\subsection{Evaluation Metrics}
We assess performance based on factual accuracy. The ground truth answers in our benchmark are pre-processed to be concise and unambiguous. To scale the evaluation, we employ a model-based judge using \texttt{DeepSeek-R1-0528}. The judge compares the generated response against the ground truth and assigns a binary score (1 for a match, 0 otherwise). We report the final Accuracy (\%) averaged across the test set.

\subsection{Baseline Details}
\label{baselines-details}
We provide the full descriptions of all baseline categories compared in Section~\ref{sec:main_results}.

\paragraph{Conventional Search-based RAG (Top-1)} This category represents the standard retrieve-and-read RAG. They rely on an external search engine to fetch relevant context, which is then concatenated to the input of the frozen base LLM. We employ a variety of state-of-the-art (SOTA) embedding models to perform retrieval, ensuring a strong upper bound: BGE-M3~\cite{chen2024bge}, OpenAI-text-embedding-3-large, Gemini-Embedding~\cite{lee2025gemini}, and Qwen3-Embedding~\cite{zhang2025qwen3}.

\paragraph{Stronger RAG Pipelines} To address the concern that a top-1 retrieve-and-read setup underestimates modern RAG, we additionally evaluate: \textbf{(i) Top-$k$ retrieval} with $k\in\{3,5,10\}$ using the strongest embedding (Qwen3-8B-Emb); \textbf{(ii) retrieval + reranking}, where we retrieve top-10 candidates and rerank them with a cross-encoder (BGE-Reranker-v2-m3 and Qwen3-Reranker-4B) to select the final top-3; \textbf{(iii) HyDE}~\cite{gao2023precise}, which generates a hypothetical answer to rewrite the query before retrieval; and \textbf{(iv) Query Rewrite}, in which we prompt the base LLM to paraphrase the question prior to retrieval.

\paragraph{Parametric Knowledge Injection and Memorization} These baselines aim to inject the external corpus $\mathcal{D}$ directly into the model parameters. \textbf{(1) Full Tuning}: standard full-parameter CPT on the raw corpus $\mathcal{D}_\text{CPT}$ followed by instruction tuning on large-scale Q\&A pairs constructed from $\mathcal{D}$. \textbf{(2) LoRA Tuning}: same training set as Full Tuning but applied via LoRA. \textbf{(3) Modular Injection}: methods that mount knowledge into the LLM's computational flow, including KBLAM~\cite{wang2025kblam} and SR-KI~\cite{yu2025sr}. \textbf{(4) Retriever-Imitating Parametric Memory}: three recent methods closest to RING in spirit---LAG~\cite{fleshman2025lora} attaches per-document LoRA adapters and selects them at inference via data-free routing; MLP Memory~\cite{wei2025mlp} pretrains a dedicated MLP to imitate a $k$NN retriever's distribution; AtlasKV~\cite{huang2025atlaskv} stores billion-scale knowledge as compressed key--value caches augmented into the LLM. A significant drawback of (3) and partially of (4) is that GPU memory usage grows with the size of the injected corpus. Compared with these baselines, RING (i) keeps a separate Basic Expert to prevent forgetting, (ii) injects the corpus via bidirectional DCA rather than fitting a retriever's logits or storing raw KV slabs, and (iii) learns \emph{when} and \emph{what} to retrieve through RL with dense rewards rather than via a fixed similarity-based mechanism.

\subsection{Implementation Details} \label{imple}
We utilize Qwen3-8B and Qwen3-14B as our backbone foundation model. Our training pipeline consists of three sequential stages. We use the AdamW optimizer with a cosine learning rate scheduler for all stages. In CPT, we train the model on the constructed News-2025 corpus. During this stage, we freeze all parameters except for the Memory Down and Memory Up projections of the Knowledge Expert. We employ the proposed Dual Causal Attention with mask ratios $\alpha \in \{1, 1/2, 1/4\}$ and loss weights $\lambda_1=1.0$, $\lambda_2=0.3$, and $\lambda_3=0.1$. The max sequence length is set to 4096, the learning rate is 2e-5 and epoch is 8. In SFT, we utilize the Search-then-Answer instruction set mixed with general instructions at a ratio of 1:1. Only Router and Search Gate are trainable. Retrieved fragments are truncated to 256 tokens. The max sequence length is set to 4096, the learning rate is 2e-5 and epoch is 4 with early-exit. In RL, we use ms-swift as the framework. \(\lambda_{\text{a}}\), \(\lambda_{\text{b}}\), and \(\lambda_{\text{c}}\) are 0.8, 0.8, 1.0 respectively. \(\lambda_{\text{format}}\), \(\lambda_{\text{answer}}\), and \(\lambda_{\text{search}}\) are 0.1, 1.0, 1.0 respectively. All models are implemented using PyTorch and trained on 32 $\times$ NVIDIA H800 GPUs. We utilize DeepSpeed Zero-3 for memory optimization. The learning rate is 1e-6.

\section{AI Use Statement}
We used AI-based writing assistance only for language polishing, including grammar correction,
wording refinement, and improving readability. AI tools were not used for formulating the research
ideas, designing the method, conducting experiments, analyzing results, or generating substantive
scientific content. All technical claims, experimental results, and conclusions were produced and
verified by the authors.

\end{document}